\newdimen\arrowsize
\tikzstyle{rect} = [rectangle, rounded corners, minimum width=3.5cm, minimum height=3cm,text centered, draw=black, fill=orange!30]
\tikzstyle{rect2} = [rectangle, rounded corners, minimum width=2.5cm, minimum height=0.9cm,text centered, draw=black, fill=green!30]
\tikzstyle{arrow} = [thick,->,>=stealth]
\DeclareMathOperator*{\argmin}{arg\,min}
\newcommand\mean{{\mathbb{E}}}
\newcommand\given{{\,|\,}}
\newcommand\BB[1]{{\mathbb{#1}}}
\newcommand\B[1]{{\mathbf{#1}}}
\newcommand\C[1]{{\mathcal{#1}}}
\newcommand{\PA}[2][]{{\B{PA}}^{#1}_{#2}}
\newcommand{\independent}{\protect\mathpalette{\protect\independenT}{\perp}}
\def\independenT#1#2{\mathrel{\rlap{$#1#2$}\mkern2mu{#1#2}}}
\let\oldnl\nl
\newcommand{\nonl}{\renewcommand{\nl}{\let\nl\oldnl}}
\begin{document}

\title{Invariant Models for Causal Transfer Learning}

\author{\name Mateo Rojas-Carulla \email mr597@cam.ac.uk \\
  \addr Max Planck Institute for Intelligent Systems \\
  T\"ubingen, Germany\\[0.5\baselineskip]
  \addr Department of Engineering\\
  Univ.\ of Cambridge, United Kingdom
       \AND
       \name Bernhard Sch\"olkopf \email bs@tuebingen.mpg.de \\
       \addr Max Planck Institute for Intelligent Systems \\
       T\"ubingen, Germany
       \AND
       \name Richard Turner \email ret26@cam.ac.uk \\
       \addr Department of Engineering\\
       Univ.\ of Cambridge, United Kingdom
       \AND
       \name Jonas Peters\thanks{Most of this work was done while JP was at the Max Planck Institute for Intelligent Systems in T\"ubingen.} \email jonas.peters@math.ku.dk\\
       \addr Department of Mathematical Sciences\\
       Univ.\ of Copenhagen, Denmark
}

\editor{Massimiliano Pontil}

\maketitle

\begin{abstract}
Methods of 
transfer learning
try to combine knowledge from several related tasks (or domains) to improve performance on a test task.
Inspired by causal methodology, we relax the usual covariate shift assumption and assume that it holds true for a {\em subset} of predictor variables: the conditional distribution of the target variable given this subset of predictors is invariant over all tasks.
We show how this assumption can be motivated from ideas in the field of causality. We focus on the problem of Domain Generalization, in which no examples from the test task are observed. 
We prove that in an adversarial setting using this subset for prediction is optimal in Domain Generalization; we further provide examples, in which the tasks are sufficiently diverse and the estimator therefore outperforms pooling the data, even on average.  
If examples from the test task are available, we also provide a method to transfer knowledge from the training tasks and exploit all available features for prediction. 
However, we provide no guarantees for this method. 
We introduce a practical method which allows for automatic inference of the above subset 
and provide corresponding code. We present results on synthetic data sets and a gene deletion data set. 
\end{abstract}

\begin{keywords}
  Transfer learning, Multi-task learning, Causality, Domain adaptation, Domain generalization.
\end{keywords}

\graphicspath{{figures/}}

\section{Introduction}{\label{sec:intro}}

Standard approaches to supervised learning assume that training and test data can be modeled as an i.i.d.\ sample from a distribution $\mathbb{P} := \mathbb{P}^{(\mathbf{X},Y)}$.  The inputs $\mathbf{X}$ are often vectorial, and the outputs~$Y$ may be labels (classification) or continuous values (regression). 
The i.i.d.\
setting is theoretically well understood and yields remarkable predictive accuracy in problems such as image classification, speech recognition and machine translation \citep[e.g.,][]{schmidhuber2015deep,krizhevsky2012imagenet}. 
However, many real world problems do not fit into this setting. The field of transfer learning attempts to address the scenario in which distributions may change between training and testing. We focus on two different problems within transfer learning: domain generalization and multi-task learning. We begin by describing these two problems,  
followed by a discussion of existing assumptions made to address the problem of knowledge transfer, as well as the new assumption we assay in this paper.

\subsection{Domain generalization and multi-task learning}
Assume that we want to predict a target $Y \in \BB{R}$ from some predictor variable $\B{X} \in \BB{R}^p$. 
Consider $D$ training (or source) tasks\footnote{In this work, we use the expression ``task'' and ``domain'' interchangeably.} $\mathbb{P}^1, \ldots,\mathbb{P}^D$
where each $\mathbb{P}^k$ 
represents a probability distribution generating data $(\B{X}^k,Y^k) \sim \mathbb{P}^k$.
At training time, we observe a sample $\left(\B{X}_i^k,Y_i^k \right)_{i=1}^{n_k}$ for each source task $k\in \{1,\ldots,D\}$; at test time, we want to predict the target values of an unlabeled sample from the task $T$ of interest.
We wish to learn a map $f:\BB{R}^p\rightarrow \BB{R}$ with small expected squared loss  
$\C{E}_{\BB{P}^T}(f) = \mathbb{E}_{(\B{X}^T,Y^T)\sim \mathbb{P}^{T}} (Y^T-f(\B{X}^T))^2$
on the test task $T$.

In domain generalization (DG) \cite[e.g.,][]{muandet2013domain}, we have $T=D+1$, that is, we are interested in using information from the source tasks in order to predict $Y^{D+1}$ from $\B{X}^{D+1}$ in a related yet unobserved test task $\mathbb{P}^{D+1}$. 
To beat simple baseline techniques, regularity conditions on the differences of the tasks are required.
Indeed, if the test task differs significantly from the source tasks, we may run into the problem of negative transfer \citep{pan_survey_2010} and DG becomes impossible \citep{ben2010impossibility}.

If examples from the test task are available during training \cite[e.g.,][]{pan_survey_2010,baxter_model_2000}, we refer to the problem as 
asymmetric multi-task learning (AMTL). 
If the objective is to improve performance in all the training tasks \cite[e.g.,][]{caruana_multitask_1997}, we call 
the problem symmetric multi-task learning (SMTL), see Table~\ref{tab:taxo} for a summary of these settings.
\begin{table}
\small
  \begin{center}
\[\arraycolsep=0.8pt\def\arraystretch{1.2}
\begin{tabular}{c||l|c}
method & training data from & test domain \\\hline 
\multirow{2}{*}{Domain Generalization (DG)}& $(\B{X}^1, {Y}^1), \ldots, (\B{X}^D, {Y}^D)$ & \multirow{2}{*}{$T:={D+1}$}\\
&$(\B{X}^1, {Y}^1), \ldots, (\B{X}^D, {Y}^D), \B{X}^{D+1}$&\\\hline
\multirow{2}{*}{Asymm. Multi-Task Learning (AMTL)}& $(\B{X}^1, {Y}^1), \ldots, (\B{X}^D, {Y}^D) $& \multirow{2}{*}{$T := D$}\\
& $(\B{X}^1, {Y}^1), \ldots, (\B{X}^D, {Y}^D), {\B{X}}^D $&\\\hline
\multirow{2}{*}{Symm. Multi-Task Learning (SMTL)}& $(\B{X}^1, {Y}^1), \ldots, (\B{X}^D, {Y}^D) $& \multirow{2}{*}{all}\\
& $(\B{X}^1, {Y}^1), \ldots, (\B{X}^D, {Y}^D), \B{X}^1, \ldots, \B{X}^D $& 
\end{tabular}
\]
\end{center}
\caption{Taxonomy for domain generalization (DG) and multi-task learning (AMTL and SMTL). Each problem can either be used without (first line) or with (second line) additional unlabeled data.}
\label{tab:taxo}
\end{table}
In multi-task learning (MTL), which includes both AMTL and SMTL, if infinitely many labeled data are available from the test task, it is impossible to beat a method that learns on the test task and ignores the training tasks. 

\subsection{Prior work} 
A first family of methods assumes that \textbf{covariate shift} holds \citep[e.g.,][]{quionero-candela_dataset_2009,schweikert_empirical_2009}. This states that for all $k\in\{1,\ldots,D,T\}$, the conditional distributions $Y^k\given \B{X}^k$ are \textbf{invariant} between tasks. Therefore, the differences in the joint distribution of $\B{X}^k$ and $Y^k$ originate from a difference in the marginal distribution of $\B{X}^k$. Under covariate shift, for instance, if an unlabeled sample from the test task is available at training in the DG setting, the training sample can be re-weighted via importance sampling \citep{gretton2009covariate, shimodaira2000improving, sugiyama2008direct} so that it becomes representative of the test task. 

Another line of work focuses on \textbf{sharing parameters} between tasks. This idea originates in the hierarchical Bayesian literature~\citep{bonilla2007multi,gao2008knowledge}. For instance, \citet{lawrence2004learning} introduce a model for MTL in which the mapping $f_k$ in each task $k\in\{1,\ldots,D,T\}$ is drawn independently from a common Gaussian Process (GP), and the likelihood of the latent functions depends on a shared parameter $\theta$. A similar approach is introduced by \citet{evgeniou2004regularized}: 
they consider 
an SVM with weight vector $w^k = w_0 + v^k$, where $w_0$ is shared across tasks and $v^k$ is task specific. 
This allows for tasks to be similar (in which case $v^k$ does not have a significant contribution to predictions) or quite different. \citet{daume_iii_frustratingly_2010} use a related approach for MTL when there is one source and one target task. Their method relies on the idea of augmented feature space, which they obtain using two features maps $\Phi^s(\B{X}^s) = (\B{X}^s, \B{X}^s, 0)$ for the source examples and $\Phi^t(\B{X}^t) = (\B{X}^t, 0, \B{X}^t)$ for the target examples. They then train a classifier using these augmented features. Moreover, they propose a way of using available unlabeled data from the target task at training. 

An alternative family of methods is based on learning a set of \textbf{common features} for all tasks~\citep{argyriou_multi-task_2006,romera-paredes_exploiting_2012,argyriou2007spectral, raina_self-taught_2007}. For instance, \citet{argyriou_multi-task_2006,argyriou2007spectral} propose to learn a set of low dimensional features shared between tasks using $L^1$ regularization, and then learn all tasks independently using these features. In \citet{raina_self-taught_2007}, the authors construct a similar set of features using $L^1$ regularization but make use of only unlabeled examples. \cite{Chen2012} proposes to build shared feature mappings which are robust to noise by using autoencoders. 

Finally, the assumption introduced in this paper is based on a \textbf{causal} view on domain adaptation and transfer.

\citet{ScholkopfJPSZMJ2012} relate multi-task learning with the independence between cause and mechanism. This notion is closely related to exogeneity \citep{Zhang2015TARK}, which roughly states that a causal mechanism mapping a cause $X$ to $Y$ should not depend on the distribution of $X$.
Additionally, \citet{zhang_domain_2013} consider the problem of target and conditional shift when the target variable is causal for the features. They 
assume that there exists a linear mapping between the covariates in different tasks, and the parameters of this mapping only depend on the distribution of the target variable. Moreover, \citet{zhang_multi-source_2015} argue that the availability of multiple domains is sufficient to drop this previous assumption when the distribution of $Y^k$ and the conditional $\B{X}^k\given Y^k$ change independently. The conditional in the test task can then be written as a linear mixture of the conditionals in the source domains. 
The concept of invariant conditionals and exogeneity can also be used for causal discovery \citep{peters_causal_2015,Zhang2015TARK,PetJanSch17}. 

\subsection{Contribution}
Taking into account causal knowledge, \textbf{our approach} to DG and MTL assumes that covariate shift holds only for a subset of the features. 
From the point of view of causal modeling \citep{pearl_causality:_2009}, assuming invariance of conditionals makes sense if the conditionals represent causal mechanisms
\citep[e.g.,][]{Hoover1990}, see Section~\ref{sec:causality} for details. 
Intuitively, we expect that a causal mechanism is a property of the physical world, and it does not depend on what we feed into it. If the input (which in this case coincides with the covariates) shifts, the mechanism should thus remain invariant \citep{Hoover1990,JanSch10,peters_causal_2015}. In the anticausal direction, however, a shift of the input usually leads to a changing conditional \citep{ScholkopfJPSZMJ2012}.
In practice, prediction problems are often not causal --- we should allow for the possibility that the set of predictors contains variables that are causal, anticausal, or confounded, i.e., statistically dependent variables without a directed causal link with the target variable. We thus expect that there is a \emph{subset} $S^*$ of predictors, referred to as an {\bf invariant set}, for which the covariate shift assumption holds true, i.e., the conditionals of output given predictor $Y^k \given \B{X}_{S^*}^k$  are invariant across $k\in \{1,\ldots,D, T\}$.
If $S^*$ is a strict subset of all predictors, this relaxes full covariate shift. 
We prove that knowing $S^*$ leads to robust properties for DG. 
Once an invariant set is known, traditional methods for covariate shift can be applied as a black box, see Figure~\ref{fig:diagram}. In the MTL setting, when labeled or unlabeled examples from the test task are available during training, we might not want to discard the features outside of $S^*$ for prediction. Hence, we also propose a method to leverage the knowledge of the invariant set $S^*$ and the available examples from the test task in order to outperform a method that learns only on the test task. 

Finally, note that in this work, we concentrate on the linear setting, keeping in mind that this has specific implications for covariate shift.

\subsection{Organization of the paper}
Section~\ref{sec:invCond} formally describes our approach and its underlying assumptions; 
in particular, we assume that 
an invariant set $S^*$ is known. For DG, we prove in Section~\ref{sec:TL} that predicting using only features in $S^*$ is optimal in an adversarial setting. Moreover, we present an example in which we compare our proposed estimator with pooling the training data, a standard technique for DG. In MTL, when additional labeled examples from $T$ are available, one might want to use all available features for prediction. Section~\ref{sec:MTL} provides a method to address this. We discuss a link to causal inference in Section~\ref{sec:causality}.
Often, an invariant set $S^*$ is not known a priori. Section~\ref{sec:learnInv} presents a method for inferring an invariant set from data. Section~\ref{sec:experiments} contains experiments on simulated and real data.

\section{Exploiting invariant conditional distributions in transfer learning}{\label{sec:invCond}}
Consider a transfer learning regression problem with source tasks $\mathbb{P}^1, \ldots,\mathbb{P}^D$, where $(\B{X}^k, Y^k)\sim \BB{P}^k$ for $k\in\{1,\ldots,D\}$.\footnote{We assume throughout this work the existence of densities and that random variables have finite variance.}
We now formulate our main assumptions.\\
\begin{compactitem}
\item[(A1)\,]
There exists a subset $S^* \subseteq \{1, \ldots, p\}$ of predictor variables such that 
\begin{equation} \label{eq:invpred}
Y^k\given\B{X}_{S^*}^k   \, \overset{d}{=} \,  Y^{k'}\given\B{X}_{S^*}^{k'}
\;\quad  \forall k, k' \in \{1, \ldots, D\}.
\end{equation}
We say that $S^*$ is an \textbf{invariant set} which leads to {invariant conditionals}. 
Here, 
$ \overset{d}{=}$ denotes equality in distribution.
\item[(A1')] This invariance also holds in the test task $T$, i.e., \eqref{eq:invpred} holds for all $k, k' \in \{1, \ldots, D, T\}$. 
\item[(A2)\,] The conditional distribution of $Y$ given an invariant set $S^*$ is linear: there exists $\alpha\in\BB{R}^{|S^*|}$ and a random variable $\epsilon$ such that for all $k\in\{1,\ldots,D\}$,
 $   [Y^k \given \B{X}_{S^*}^k=x]\overset{d}{=} \alpha^t x+\epsilon^k,
 $
  that is $Y^k = \alpha^t \B{X}_{S^*}^k + \epsilon^k$, with $\epsilon^k \independent \B{X}_{S^*}^k$ and for all $k\in\{1,\ldots,D\}$, $\epsilon^k  \overset{d}{=} \epsilon$.
\end{compactitem}
\vspace{0.5cm}
Assumption~(A1') is stronger than (A1) only in the DG setting, where, of course, (A1') and (A2) imply the linearity also in the test task $T$. 
While Assumption~(A1) is testable from training data, 
see Section~\ref{sec:learnInv}, (A1') is not. 
In covariate shift, one usually assumes that (A1') holds for the set of all features. Therefore,~(A1') 
is a weaker condition than covariate shift, see Figure~\ref{fig:diagram}. We regard this assumption as a building block that can be combined with any method for covariate shift, applied to the subset $S^*$.
It is known that it can be arbitrarily hard to exploit the assumption of covariate shift in practice \citep{ben2010impossibility}. In a general setting, for instance, assumptions about the support of the training distributions $\BB{P}^1, \ldots, \BB{P}^D$ and the test distribution $\BB{P}^T$ must be made for methods such as re-weighting to be expected to work \citep[e.g.,][]{gretton2009covariate}. 
The aim of our work is not to solve the full covariate shift problem, but to elucidate a relaxation of covariate shift in which it holds given only a subset of the features. 
We concentrate on linear relations (A2), which circumvents the issue of overlapping supports, for example. 

For the remainder of this section, we assume that we 
are given an invariant subset $S^*$ that 
satisfies (A1) and (A2). Note that we will also require (A1') for DG. In MTL, the invariance can be tested on the labeled data available 
from the test task, so (A1) and (A1') are equivalent. 

We show how the knowledge of $S^*$ can be exploited for the DG problem (Section~\ref{sec:TL}) and in the MTL case (Section~\ref{sec:MTL}). 
Here and below, we focus on linear regression using squared loss
\begin{equation} \label{eq:loss}
\C{E}_{\BB{P}^T}(\beta) = \mathbb{E}_{(\B{X}^T,Y^T)\sim \mathbb{P}^{T}} (Y^T - \beta^t\B{X}^T)^2
\end{equation}
(the superscript $T$ corresponds to the test task, not to be confused with the transpose, indicated by superscript~$t$). 
We denote by $\C{E}_{\BB{P}^{1},\ldots,\BB{P}^{D}}(\beta)$ the squared error averaged over the training tasks $k \in \{1, \ldots, D\}$.

\subsection{Domain generalization (DG): no labels from the test task} {\label{sec:TL}}
We first study the DG setting in which we receive no labeled examples from the test task during training time. Throughout this subsection, we assume that additionally to~(A1) and~(A2), assumption~(A1') holds. 
It is important to appreciate that (A1') is a strong assumption that is not testable on the training data: it is an assumption about the test task. We believe no nontrivial statement about DG is possible without an assumption of this type.

Now, we introduce our proposed estimator, which uses the conditional mean of the target variable given the invariant set in the training tasks. We prove that this estimator is optimal in an adversarial setting.

\begin{figure}
\begin{tikzpicture}
\node[draw, rounded rectangle, fill=blue!10, text depth = 0.9cm,minimum width=9cm, minimum height=2.8cm,align = center, xshift = -2cm] (main){\\(A1): $\exists S^*\subseteq\{1,\ldots,p\}:\,Y\given \B{X}_{S^*}$ invariant.};
\node[draw,rounded rectangle, fill=orange!20, yshift = -0.5cm] at (main.center) (cs){Covariate shift holds: $Y\given \B{X}_{\{1,\ldots,p\}}$ invariant.};
\node[draw, rounded rectangle, fill=green!20, right of=main, xshift =6.6cm, minimum height =1cm,align = center] (csm){Use methods for covariate \\shift, applied to $S^*$.\\ Here, (A2): \textbf{linear model}};
\draw [arrow] (main) -- (csm);
\end{tikzpicture}
\caption{Assumption (A1) (blue) is a relaxation of covariate shift (orange): the covariate shift assumption is a special case of (A1) with $S^* = \{1, \ldots, p\}$. Given the invariant set $S^*$, methods for covariate shift can be applied.}
\label{fig:diagram}
\end{figure}
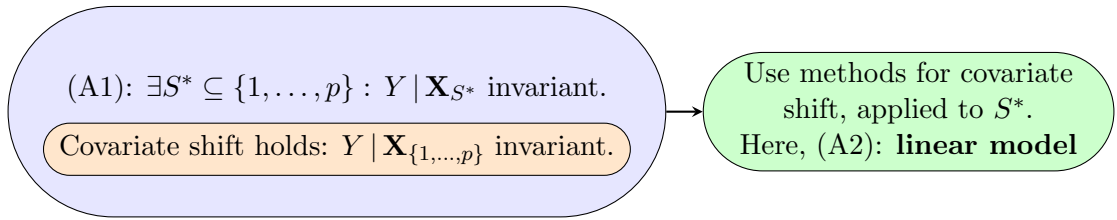
\paragraph{Proposed estimator.}
The optimal predictor obtained by minimizing ~\eqref{eq:loss}
is the conditional mean~
\begin{equation}
  \beta^{opt} := \argmin_{\beta\in\BB{R}^p} \C{E}_{\BB{P}^T}(\beta),
  \label{eq:betaopt}
\end{equation}
which is not available during training time.
Given an invariant set $S^*$ satisfying~(A1), (A1') and (A2), we propose to use the corresponding conditional expectation as an estimator. In other words, let $\beta^{S^*} = \argmin_{\beta\in\mathbb{R}^{|S^*|}} (Y^1-\beta^t\B{X}_{S^*}^1)^2$ be the vector obtained by minimizing the squared loss in the training tasks using only predictors in $S^*$. We propose as a predictor the vector $\beta^{CS(S^*)}\in\mathbb{R}^{p}$ obtained by adding zeros to $\beta^{S^*}$ in the dimensions corresponding to covariates outside of $S^*$. More formally, we propose to use as a predictor  
\begin{equation} \label{eq:estTL}
\begin{array}{ccc}
 \BB{R}^p & \rightarrow & \BB{R}\\
 \B{x} & \mapsto & \mathbb{E} [Y^1 \given \B{X}_{S^*}^1=\B{x}_{S^*}]
\end{array}
\quad \text{ and write } \quad 
\mathbb{E} [Y^1 \given \B{X}_{S^*}^1=\B{x}_{S^*}]  = \big(\beta^{CS(S^*)}\big)^t \, \B{x} 
.
\end{equation}
Because of~(A1), the conditional expectation in~\eqref{eq:estTL} is the same in all training tasks. 
In the limit of infinitely many data, given a subset $S$, $\beta^{CS(S)}$ is obtained by pooling the training tasks and regressing using only features in $S$. In particular, $\beta^{CS} := \beta^{CS(\{1,\ldots,p\})}$ is the estimator obtained when assuming traditional covariate shift.

\paragraph{Optimality in an adversarial setting.}
In an adversarial setting, predictor~\eqref{eq:estTL} satisfies the following optimality condition; as for the other results, the proof is provided in Appendix~\ref{app:proofs}. 
We state and prove a more general, nonlinear version of Theorem~\ref{prop:adversarial} in Appendix~\ref{proof:adversarial}.
\begin{theorem}[Adversarial] \label{prop:adversarial}
Consider $(\B{X}^1, Y^1) \sim \BB{P}^1$,$\ldots$, $(\B{X}^D, Y^D) \sim \BB{P}^D$ and an invariant set $S^*$ satisfying~(A1) and~(A2).
The proposed estimator satisfies an optimality statement over the set of distributions such that (A1') holds:
we have
$$
\beta^{CS(S^*)} \in \argmin_{\beta\in\BB{R}^p}  \sup_{\mathbb{P}^T \in \C{P}} \C{E}_{\BB{P}^T}(\beta), 
$$
where $\beta^{CS(S^*)}$ is defined in~\eqref{eq:estTL}
and $\C{P}$ 
contains all distributions over $(\B{X}^T, Y^T)$, $T= D+1$, that are absolutely continuous with respect to the same product measure $\mu$ and satisfy 
$Y^T \given \B{X}_{S^*}^T \overset{d}{=} Y^1 \given \B{X}_{S^*}^1$.
\end{theorem}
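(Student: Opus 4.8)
The plan is to show that the minimax value equals the irreducible noise variance $\sigma^2 := \mathrm{Var}(\epsilon)$, with $\epsilon$ as in~(A2) and assumed centred (otherwise one absorbs $\mean[\epsilon]$ into an intercept, so that $\mean[Y^1\given\B{X}_{S^*}^1=x]=\alpha^t x$). It suffices to prove two bounds: (i) $\sup_{\BB{P}^T\in\C{P}}\C{E}_{\BB{P}^T}(\beta^{CS(S^*)})=\sigma^2$, and (ii) $\sup_{\BB{P}^T\in\C{P}}\C{E}_{\BB{P}^T}(\beta)\ge\sigma^2$ for every $\beta\in\BB{R}^p$. Combining them gives $\sup_{\BB{P}^T}\C{E}_{\BB{P}^T}(\beta^{CS(S^*)})=\sigma^2\le\sup_{\BB{P}^T}\C{E}_{\BB{P}^T}(\beta)$ for all $\beta$, i.e.\ $\beta^{CS(S^*)}\in\argmin_{\beta}\sup_{\BB{P}^T\in\C{P}}\C{E}_{\BB{P}^T}(\beta)$.

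For~(i) I would fix an arbitrary $\BB{P}^T\in\C{P}$. Since the components of $\beta^{CS(S^*)}$ outside $S^*$ vanish and, by~(A2), it acts as $\B{x}\mapsto\alpha^t\B{x}_{S^*}$, we have $\C{E}_{\BB{P}^T}(\beta^{CS(S^*)})=\mean_{\BB{P}^T}(Y^T-\alpha^t\B{X}_{S^*}^T)^2$. Conditioning on $\B{X}_{S^*}^T$ and using the defining constraint $Y^T\given\B{X}_{S^*}^T\overset{d}{=}Y^1\given\B{X}_{S^*}^1\overset{d}{=}\alpha^t\B{X}_{S^*}^1+\epsilon$ with $\epsilon\independent\B{X}_{S^*}^1$, the inner conditional second moment is $\mean[\epsilon^2]=\sigma^2$ for $\BB{P}^T$-almost every value of $\B{X}_{S^*}^T$, hence $\C{E}_{\BB{P}^T}(\beta^{CS(S^*)})=\sigma^2$ regardless of which $\BB{P}^T$ was chosen; as $\C{P}\neq\emptyset$ (it contains $\BB{P}^1$), the supremum is $\sigma^2$.

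For~(ii), rather than optimise over $\C{P}$, I would exhibit a single adversarial element that already forces $\C{E}_{\BB{P}^T}(\beta)\ge\sigma^2$. Take $(\B{X}_{S^*}^T,Y^T)$ with the same joint law as $(\B{X}_{S^*}^1,Y^1)$ and let the remaining coordinates $\B{X}_{S^{*c}}^T$, $S^{*c}:=\{1,\ldots,p\}\setminus S^*$, be independent of $(\B{X}_{S^*}^T,Y^T)$, centred, with arbitrary covariance; this $\BB{P}^T$ lies in $\C{P}$ (the conditional constraint holds by construction, and absolute continuity w.r.t.\ $\mu$ holds under the standing support assumption). Decomposing $Y^T-\beta^t\B{X}^T=\epsilon^T+(\alpha-\beta_{S^*})^t\B{X}_{S^*}^T-\beta_{S^{*c}}^t\B{X}_{S^{*c}}^T$ with $\epsilon^T:=Y^T-\alpha^t\B{X}_{S^*}^T$, the identities $\mean[\epsilon^T\given\B{X}_{S^*}^T]=0$ and $\mean[(\epsilon^T)^2\given\B{X}_{S^*}^T]=\sigma^2$ together with the independence of $\B{X}_{S^{*c}}^T$ annihilate all three cross terms, leaving $\C{E}_{\BB{P}^T}(\beta)=\sigma^2+\mean[((\alpha-\beta_{S^*})^t\B{X}_{S^*}^T)^2]+\mean[(\beta_{S^{*c}}^t\B{X}_{S^{*c}}^T)^2]\ge\sigma^2$. (Moreover, if $\beta_{S^{*c}}\neq0$ then inflating that covariance drives the loss to $+\infty$, and a nonzero bias $\alpha-\beta_{S^*}$ makes the middle term strictly positive for a suitable $S^*$-marginal, so $\beta^{CS(S^*)}$ is essentially the unique minimiser.)

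I expect the crux to be the cross term $\mean[\epsilon^T\,\beta_{S^{*c}}^t\B{X}_{S^{*c}}^T]$ in step~(ii): $\C{P}$ pins down only $Y^T\given\B{X}_{S^*}^T$ and leaves the conditional law of $\B{X}_{S^{*c}}^T$ entirely free, so a careless adversary could introduce negative correlation and break the bound. The fix is that we need only one good adversary, and the ``independent extra coordinates'' construction kills this term (and the $\epsilon^T$--$\B{X}_{S^*}^T$ cross term, via $\mean[\epsilon^T\given\B{X}_{S^*}^T]=0$); the rest is routine measure-theoretic bookkeeping — nonemptiness of $\C{P}$, absolute continuity of the constructed $\BB{P}^T$ with respect to $\mu$, and the harmless centring of $\epsilon$. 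The nonlinear version deferred to the appendix should follow from the same two steps, with $\alpha^t\B{x}_{S^*}$ replaced by the common regression function $\mean[Y^1\given\B{X}_{S^*}^1=\B{x}_{S^*}]$ and the minimisation taken over measurable predictors.
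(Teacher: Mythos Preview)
Your argument is correct for Theorem~\ref{prop:adversarial} and shares the decisive construction with the paper: the adversary makes the non-invariant coordinates $\B{X}_{S^{*c}}$ independent of $(\B{X}_{S^*},Y)$. The organisation, however, is different. You exploit~(A2) to pin the minimax value to the constant $\sigma^2=\mean[\epsilon^2]$, and then match it from above (part~(i)) and below (part~(ii)) using one fixed adversary built from $\BB{P}^1$. The paper instead proves the nonlinear Theorem~3 directly via a \emph{comparison} inequality: for every $\BB{Q}\in\C{P}$ it constructs $\BB{P}\in\C{P}$ with density $p(\B{x},y)=q(\B{x}_{S^*},y)\,q(\B{x}_N)$ and shows $\int (y-f(\B{x}))^2\,d\BB{P}\ge \int (y-f_{S^*}(\B{x}))^2\,d\BB{Q}$, never computing the value. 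Your route is slightly slicker in the linear case because the value is an actual constant; the paper's route is what is needed in general.

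One caveat on your closing remark: the nonlinear version does \emph{not} follow from the same two steps without modification. Without~(A2) the conditional variance $v(\B{x}_{S^*}):=\mathrm{Var}(Y\given\B{X}_{S^*}=\B{x}_{S^*})$ need not be constant, so $\C{E}_{\BB{P}^T}(f_{S^*})=\mean_{\BB{P}^T}[v(\B{X}_{S^*}^T)]$ varies over $\C{P}$; your step~(i) then yields $\sup_{\BB{P}^T}\mean_{\BB{P}^T}[v(\B{X}_{S^*}^T)]$, while your step~(ii), with the adversary built from $\BB{P}^1$, only gives the lower bound $\mean_{\BB{P}^1}[v(\B{X}_{S^*}^1)]$, and these need not coincide. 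The fix is exactly the paper's move: for each $\BB{Q}$ build the adversary with the \emph{same} $\B{X}_{S^*}$-marginal as $\BB{Q}$, which is why the comparison formulation is the right abstraction for Theorem~3.
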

Unlike the optimal predictor~$\beta^{opt}$,  
the proposed estimator~\eqref{eq:estTL} can be learned from the data available in the training tasks. Given a sample $(\B{X}^k_1, Y^k_1), \ldots, (\B{X}^k_{n_k}, Y^k_{n_k})$ from tasks $k \in \{1, \ldots, D\}$, we can estimate the conditional mean in~\eqref{eq:estTL} by regressing $Y^k$ on $\B{X}_{S^*}^k$. Due to~(A1), we may also pool the data over the different tasks %
and use $$(\B{X}^1_1, Y^1_1), \ldots, (\B{X}^1_{n_1}, Y^1_{n_1}), (\B{X}^2_{1}, Y^2_{1}), \ldots, (\B{X}^D_{n_D}, Y^D_{n_D})$$ as a training sample for this regression.

One may also compare the proposed estimator with pooling the training tasks, a standard baseline in transfer learning which corresponds to assuming that usual covariate 
shift holds. Focusing on a specific example, Proposition~\ref{prop:threeNodes} in the following paragraph shows that when the test tasks become diverse, predicting using~\eqref{eq:estTL} outperforms pooling on average over all tasks.

\paragraph{Comparison against pooling the data.}
We proved that the proposed estimator~\eqref{eq:estTL} does well on an adversarial setting, in the sense that it minimizes the largest error on a task in $\C{P}$. The following result provides an example in which we can analytically compare the proposed estimator with the estimator obtained from pooling the training data, which is a benchmark in transfer learning. We prove that in this setting, the proposed estimator outperforms pooling the data on average over test tasks when the tasks become more diverse.

Let $\B{X}_{S^*}^k$ be a vector of independent Gaussian variables in task $k$. Let the target $Y^k$ satisfy 
\begin{equation}\label{eq:mod}
Y^k = \alpha^t \B{X}_{S^*}^k + \epsilon^k\,,
\end{equation}
where for each $k\in\{1,\ldots,D\}$, $\epsilon^k$ is Gaussian and independent of $\B{X}_{S^*}^k$. We have $\B{X}^k = (\B{X}_{S^*}^k, Z^k)$, where  
$$
Z^k = \gamma^k Y^k + \eta^k\,,
$$
for some $\gamma^k \in \BB{R}$ and where $\eta^k$ is Gaussian and independent of $Y^k$.\footnote{Using the notation introduced later in Section~\ref{sec:causality}, this corresponds to a Gaussian SEM with DAG shown in Fig.~\ref{sdag}.} Moreover, assume that the training tasks are balanced. 
We compare properties of estimator $\beta^{CS(S^*)}$ defined in Equation~\eqref{eq:estTL} against the least squares estimator obtained from pooling the training data. 
In this setting, the tasks differ in coefficients $\gamma^k$, which are randomly sampled. 
We prove that the squared loss averaged over unseen test tasks is always larger for the pooled approach, when coefficients $\gamma^k$ are centered around zero. In the case where they are centered around a non-zero mean, we prove that when the variance between tasks (in this case, for coefficients $\gamma^k$) becomes large enough, the invariant approach also outperforms pooling the data.

\begin{proposition}[Average performance]\label{prop:threeNodes}
Consider the model described previously. Moreover, assume that the tasks differ as follows: the coefficients $\gamma^1,\ldots,\gamma^D, \gamma^T = \gamma^{D+1}$ are i.i.d.\ with mean zero and variance $\Sigma^2>0$. 
The tasks do not differ elsewhere. In particular, the distribution of $X_{S^*}^k$ is the same for all tasks. 
Then the least squares predictor obtained from pooling the $D$ training tasks~$\beta^{CS} = (\beta_{S^*}^{CS}, \beta_Z^{CS})$ satisfies:
\begin{align}\label{eq:errors_ineq}
\BB{E}_{\gamma^T}\left(\C{E}_{\BB{P}^T}\left(\beta^{CS}\right)\right)\geq\BB{E}_{\gamma^T}\left(\C{E}_{\BB{P}^T}\left(\beta^{CS(S^*)}\right)\right) = \sigma^2. 
\end{align}
In particular, this implies the following:
\begin{align}\label{eq:errors_ineq_all}
\BB{E}_{\gamma^1,\ldots,\gamma^D,\gamma^T}\left(\C{E}_{\BB{P}^T}\left(\beta^{CS}\right)\right)\geq\BB{E}_{\gamma^1,\ldots,\gamma^D,\gamma^T}\left(\C{E}_{\BB{P}^T}\left(\beta^{CS(S^*)}\right)\right) = \sigma^2. 
\end{align}
Moreover, if the coefficients $\gamma^1,\ldots, \gamma^D, \gamma^T$ are i.i.d. with non-zero mean $\mu$, 
~\eqref{eq:errors_ineq} holds for fixed $\gamma^1,\ldots,\gamma^D$ if $\Sigma^2 \geq P(\mu)$, where $P$ is a polynomial in $\mu$, see Appendix~\ref{proof:threeNodes} for details.
\end{proposition}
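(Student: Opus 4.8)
Everything lives in a jointly Gaussian linear model, so the plan is essentially computational: write the two risks as explicit functions of $\gamma^T$ (and of the fixed model quantities) and compare. I would first dispose of the easy side. By construction $\beta^{CS(S^*)}$ predicts $\B{x}\mapsto\alpha^t\B{x}_{S^*}$, and by (A1') and (A2) we have $Y^T=\alpha^t\B{X}_{S^*}^T+\epsilon^T$ with $\epsilon^T$ independent of $\B{X}_{S^*}^T$ and $\epsilon^T\overset{d}{=}\epsilon$; hence the residual of $\beta^{CS(S^*)}$ on the test task is exactly $\epsilon^T$ and $\C{E}_{\BB{P}^T}(\beta^{CS(S^*)})=\BB{E}[(\epsilon^T)^2]=\sigma^2$ for every value of $\gamma^T$. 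This already yields the equalities ``$=\sigma^2$'' on the right-hand sides of \eqref{eq:errors_ineq} and \eqref{eq:errors_ineq_all}, so only the inequalities remain.

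The main work, and the main obstacle, is a workable closed form for the pooled least-squares vector $\beta^{CS}=(\beta_{S^*}^{CS},\beta_Z^{CS})$. Since the training tasks are balanced, $\beta^{CS}$ minimizes $\beta\mapsto\frac1D\sum_{k=1}^{D}\BB{E}[(Y^k-\beta^t\B{X}^k)^2]$. The device that makes this tractable is to substitute $Z^k=\gamma^kY^k+\eta^k=\gamma^k(\alpha^t\B{X}_{S^*}^k+\epsilon^k)+\eta^k$ \emph{before} minimizing: using that $\B{X}_{S^*}^k,\epsilon^k,\eta^k$ are mutually independent and centred, the objective becomes an explicit quadratic in $(\beta_{S^*},\beta_Z)$ that can be minimized sequentially. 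Minimizing over the block $\beta_{S^*}$ for fixed $\beta_Z$ gives $\beta_{S^*}=(1-\beta_Z\bar\gamma)\,\alpha$, where $\bar\gamma$ is the average of $\gamma^1,\dots,\gamma^D$; substituting back and minimizing the resulting scalar quadratic in $\beta_Z$ gives $\beta_Z^{CS}=\bar\gamma\sigma^2/A$ with $A=s^2(\tau^2+\sigma^2)+\bar\gamma^2\sigma^2+\sigma_\eta^2>0$, where $s^2$ is the empirical variance of $\gamma^1,\dots,\gamma^D$, $\tau^2:=\BB{E}[(\alpha^t\B{X}_{S^*}^1)^2]$ and $\sigma_\eta^2:=\mathrm{Var}(\eta)$. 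In particular $\beta_Z^{CS}=0$ exactly when $\bar\gamma=0$, in which case $\beta^{CS}=\beta^{CS(S^*)}$ and there is nothing to prove.

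I would then push $\beta^{CS}$ through the test task. Expanding the residual $Y^T-(\beta_{S^*}^{CS})^t\B{X}_{S^*}^T-\beta_Z^{CS}Z^T$ with $Z^T=\gamma^T(\alpha^t\B{X}_{S^*}^T+\epsilon^T)+\eta^T$ and using $1-(1-\beta_Z^{CS}\bar\gamma)=\beta_Z^{CS}\bar\gamma$, the coefficient of $\B{X}_{S^*}^T$ collapses to $\beta_Z^{CS}(\bar\gamma-\gamma^T)\alpha$, so by independence
\[
\C{E}_{\BB{P}^T}(\beta^{CS})=(\beta_Z^{CS})^2(\bar\gamma-\gamma^T)^2\,\tau^2+(1-\beta_Z^{CS}\gamma^T)^2\,\sigma^2+(\beta_Z^{CS})^2\,\sigma_\eta^2 .
\]
Taking $\BB{E}_{\gamma^T}$ with $\BB{E}[\gamma^T]=0$ and $\mathrm{Var}(\gamma^T)=\Sigma^2$ gives
\[
\BB{E}_{\gamma^T}\big(\C{E}_{\BB{P}^T}(\beta^{CS})\big)=\sigma^2+(\beta_Z^{CS})^2\big[(\bar\gamma^2+\Sigma^2)\tau^2+\Sigma^2\sigma^2+\sigma_\eta^2\big]\ \ge\ \sigma^2 ,
\]
because the bracket is non-negative; this is \eqref{eq:errors_ineq}. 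Taking a further expectation over the i.i.d.\ $\gamma^1,\dots,\gamma^D$ (which are independent of $\gamma^T$, so the tower rule applies) preserves both the inequality and the value $\sigma^2$, yielding \eqref{eq:errors_ineq_all}.

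For the non-zero-mean case I would redo the $\BB{E}_{\gamma^T}$ step with $\BB{E}[\gamma^T]=\mu$ and $\mathrm{Var}(\gamma^T)=\Sigma^2$. The same algebra now produces an extra term $-2\beta_Z^{CS}\mu\sigma^2$, which may be negative; collecting the $\Sigma^2$-terms gives $\BB{E}_{\gamma^T}(\C{E}_{\BB{P}^T}(\beta^{CS}))-\sigma^2=(\beta_Z^{CS})^2\Sigma^2(\tau^2+\sigma^2)+R(\mu)$, where for fixed $\gamma^1,\dots,\gamma^D$ the quantities $\bar\gamma,s^2,A,\beta_Z^{CS}$ do not depend on $\mu$, so $R$ is a (quadratic) polynomial in $\mu$. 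Dividing by $(\beta_Z^{CS})^2(\tau^2+\sigma^2)>0$ (the case $\beta_Z^{CS}=0$ being trivial) shows that \eqref{eq:errors_ineq} holds as soon as $\Sigma^2\ge P(\mu):=-R(\mu)/\big[(\beta_Z^{CS})^2(\tau^2+\sigma^2)\big]$, which is an explicit polynomial in $\mu$. Apart from the closed form for $\beta^{CS}$, every step is elementary algebra with variances; the two points requiring care are keeping track of which quantities are held fixed when each expectation is taken, and dealing with the harmless degenerate cases ($\bar\gamma=0$ and $\tau^2+\sigma^2=0$).
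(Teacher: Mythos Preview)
Your proposal is correct and follows essentially the same route as the paper's own proof: compute the pooled least-squares coefficients by differentiating the averaged training loss, substitute them into the test-task risk, take $\BB{E}_{\gamma^T}$, and observe that the result equals $\sigma^2$ plus a non-negative term (and in the non-centred case, isolate the $\Sigma^2$-coefficient to obtain the polynomial threshold $P(\mu)$). Your presentation is in fact slightly cleaner than the paper's, since profiling out $\beta_{S^*}$ first and writing the test residual as $\beta_Z^{CS}(\bar\gamma-\gamma^T)\alpha^t\B{X}_{S^*}^T+(1-\beta_Z^{CS}\gamma^T)\epsilon^T-\beta_Z^{CS}\eta^T$ makes the final inequality immediate.
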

The proof of Proposition~\ref{prop:threeNodes} can be found in Appendix~\ref{proof:threeNodes}.
Figure~\ref{fig:error_proba} 
\begin{figure}
\centering
\includegraphics[scale = 0.5]{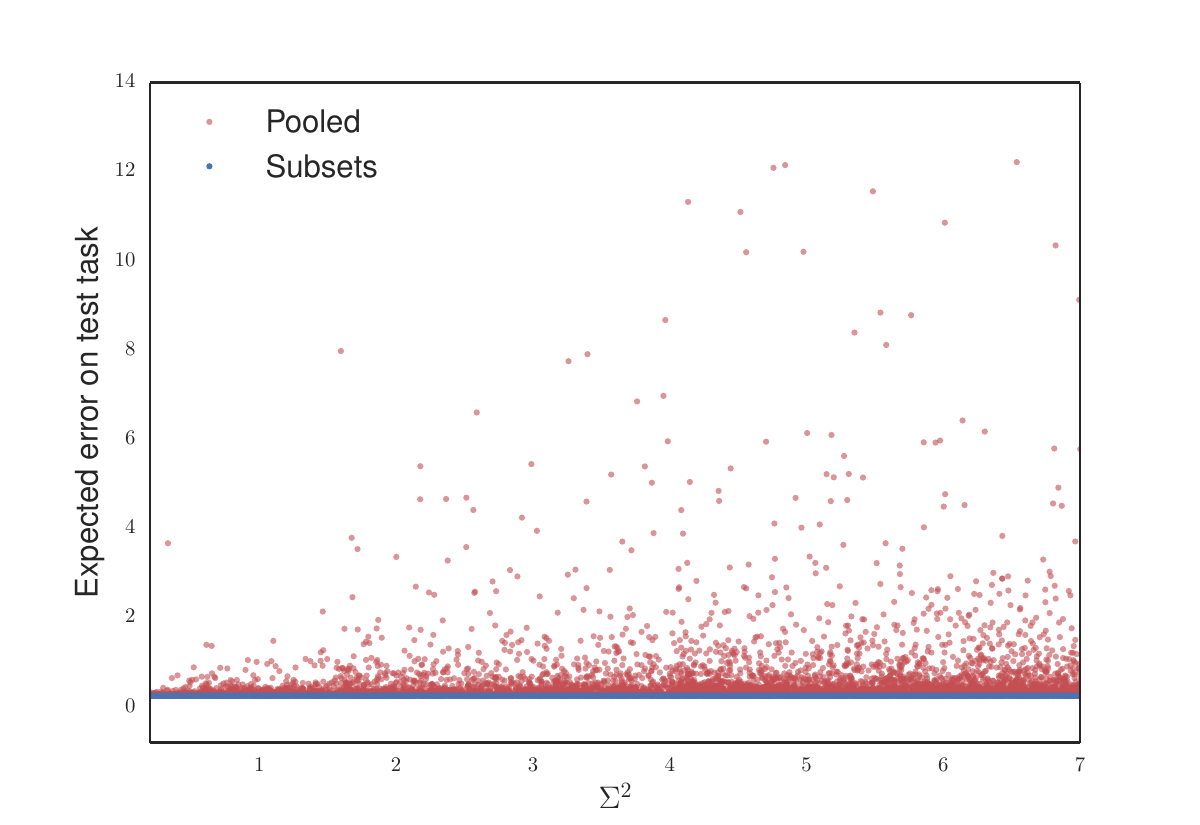}
\caption{
The figure shows expected errors for the pooled approach and the proposed method, see Equation~\eqref{eq:errors_ineq}.
$\mu = 0$. 
We consider two training tasks over $10,000$ simulations. In each, we randomly sample the variance of each covariate in $\B{X}$, the variance of $\eta$, and $\gamma$. $\sigma^2$ is the same in all tasks. 
As predicted by Proposition~Proposition~\ref{prop:threeNodes} observe that the error from the pooled approach (red) is systematically higher than the error from the prediction using only the invariant subset (blue), and both the error and its variance become large as the variance $\Sigma^2$ of coefficients $\gamma^k$ increases.}
\label{fig:error_proba}
\end{figure}
visualizes Proposition~\ref{prop:threeNodes} for two training tasks, it shows the expected errors 
for the pooled and invariant approaches, see~\eqref{eq:errors_ineq}, 
as the variance $\Sigma^2$ increases. Recall that $\Sigma^2$ corresponds to the variance of coefficients $\gamma^k$, and thus indicates how different the tasks are.
The expected errors are computed using the analytic expression found in the proof of Proposition~\ref{prop:threeNodes}.
As predicted by  Proposition~\ref{prop:threeNodes},
the expected error of the pooled approach
always exceeds the one of the proposed method (the
coefficients $\gamma^k$ are centered around zero),
see Equation~\eqref{eq:errors_ineq}. 
As $\Sigma^2$ tends to zero, $\gamma^k$ is close to zero in all tasks, which explains the equality of both the pooled and invariant errors for the limit case $\Sigma$ approaching $0$. For coefficients $\gamma^k$ centered around a non zero value, Equation~\eqref{eq:errors_ineq} does not necessarily hold for small $\Sigma^2$.

Proposition~\ref{prop:threeNodes} presents a setting in which the invariant approach outperforms pooling the data when the test errors are averaged over $\gamma$, i.e., $\BB{E}_{\gamma^T}\left(\C{E}_{\BB{P}^T}\left(\beta^{CS}\right)\right)\geq\BB{E}_{\gamma^T}\left(\C{E}_{\BB{P}^T}\left(\beta^{CS}\right)\right)$. It is also clear to see that the equality of the distribution of $\epsilon^k$ in Equation~\eqref{eq:mod} for all $k\in\{1,\ldots,D\}$ leads to $\mbox{Var}_{\gamma}\left(\C{E}_{\BB{P}^T}\left(\beta^{CS(S^*)}\right)\right) = 0$, thus our invariant estimator minimizes the variance of the test errors across all related tasks.  
 
\subsection{Multi-task learning (MTL): combining invariance and task-specific information}{\label{sec:MTL}}
In MTL, a labeled sample $\left( \B{X}_i^{T},Y_i^{T}\right)_{i=1}^{n_{T}}$ is available from the test task and the goal is to transfer knowledge from the training tasks. 
As before, we are given an invariant set $S^*$ satisfying
(A1) and~(A2).
Can we combine the invariance assumption with the new labeled sample and perform better than a method that trains only on the data in the test task?  
According to~(A1) and~(A2), the 
target satisfies $Y^k = \alpha^t\B{X}_{S^*}^k + \epsilon^k$, where the noise $\epsilon^k$ has zero mean and finite variance, is independent of $\B{X}_{S^*}^k$ and has the same distribution in the different tasks $k\in\{1,\ldots,D,T\}$. 
Our objective is to use the knowledge gained from the training tasks to get a better estimate of $\beta^{opt}$ defined in Equation~\eqref{eq:betaopt}. We describe below a way to tackle this using missing data methods.

\paragraph{Missing data approach}
In this section, we specify how we propose to tackle MTL by framing it as a missing data problem. While the idea is presented in the context of AMTL, it can be used for SMTL in the same way. In order to motivate the method, assume that for each $k\in\{1,\ldots,D,T\}$, there exists another probability distribution $\BB{Q}^k$ with density $q^k$ having the following properties:
(i) when restricted to $(\B{X}_{S^*}^k,Y^k)$, $\BB{Q}^k$ coincides with $\BB{P}^k$, (ii) the conditional $q^T(y \given \B{x}_{S^*}, \B{x}_N)$ coincides with $p^T(y\given \B{x}_{S^*}, \B{x}_N)$ on the test task and (iii) $q(y\given \B{x}_{S^*}, \B{x}_N) := q^k(y\given \B{x}_{S^*}, \B{x}_N)$ is the same in all tasks (which is not satisfied by $\BB{P}^k$, of course). 
The goal of learning the regression model from $Y$ on $\B{X}_{S^*}$ and $\B{X}_N$ in $\BB{P}^T$ coincides with the task of learning the same regression model in $\BB{Q}^T$. Property (iii) implies that we can pool the data from all tasks $\BB{Q}^k$. This is not possible, of course, for the given data, which we have received from the distributions $\BB{P}^k$. But now assume that in all training tasks, we only have access to the marginal $(\B{X}_{S^*}^k, Y^k)$ from $\BB{Q}^k$. Any method that addresses the regression under these constraints be used with the data available because of (i). We first prove the existence of such distributions $\BB{Q}^k$:

\begin{proposition}[Correctness of transfer]\label{prop:mtlok}
  Let $S^*$ be an invariant set verifying (A1) and (A2). For $k\in\{1,\ldots,D,T\}$, denote by $(\B{x},y) \mapsto p^k(\B{x},y)$ the density of $\BB{P}^k$. Then there exists a function $q:\BB{R}^p \rightarrow \BB{R}^+$ such that for each~$k\in\{1,\ldots,D,T\}$, there exists a distribution $\BB{Q}^k$ with density $q^k$ such that for all $(\B{x}, y)\in\BB{R}^{d+1}$, for all~$k\in\{1,\ldots,D,T\}$,
  \begin{compactitem}
  \item[i)] $ q^k(\B{x}_{S^*},y)  = p^k(\B{x}_{S^*}, y)$,
  \item[ii)] $q^T(y \given \B{x}_{S^*}, \B{x}_N) = p^T(y \given \B{x}_{S^*}, \B{x}_N)$,
  \item[iii)]$q^k(y \given \B{x}_{S^*}, \B{x}_N) = q(y \given \B{x}_{S^*}, \B{x}_N)$.
  \end{compactitem}
\end{proposition}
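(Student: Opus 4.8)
The plan is to exhibit the distributions $\BB{Q}^k$ explicitly. Write $N := \{1,\ldots,p\}\setminus S^*$ for the complement of the invariant set, so that $\B{x} = (\B{x}_{S^*},\B{x}_N)$, and for each $k\in\{1,\ldots,D,T\}$ let $p^k(\B{x}_N\given\B{x}_{S^*})$ denote the conditional density of $\B{X}_N^k$ given $\B{X}_{S^*}^k$ under $\BB{P}^k$ (which exists by the blanket density assumption). I would then define the joint density
\[
q^k(\B{x}_{S^*},\B{x}_N,y) \;:=\; p^k(\B{x}_{S^*},y)\,p^k(\B{x}_N\given\B{x}_{S^*}),
\]
that is, $\BB{Q}^k$ retains the law of $(\B{X}_{S^*}^k,Y^k)$ under $\BB{P}^k$ and, given $\B{X}_{S^*}^k$, draws $\B{X}_N^k$ from its $\BB{P}^k$-conditional but independently of $Y^k$. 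The first, routine, step is to check that $q^k$ integrates to one and therefore defines a probability distribution $\BB{Q}^k$; this is immediate since $\int p^k(\B{x}_N\given\B{x}_{S^*})\,d\B{x}_N = 1$ and $p^k(\cdot,\cdot)$ is itself a density.

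The two claimed properties then follow by short computations. For (i): integrating $q^k$ over $\B{x}_N$ leaves $p^k(\B{x}_{S^*},y)$, so the $(\B{X}_{S^*},Y)$-marginal of $\BB{Q}^k$ coincides with that of $\BB{P}^k$. For (ii): dividing $q^k(\B{x}_{S^*},\B{x}_N,y)$ by $\int q^k(\B{x}_{S^*},\B{x}_N,y')\,dy' = p^k(\B{x}_N\given\B{x}_{S^*})\,p^k(\B{x}_{S^*})$, the factor $p^k(\B{x}_N\given\B{x}_{S^*})$ cancels and one is left with
\[
q^k(y\given\B{x}_{S^*},\B{x}_N) \;=\; \frac{p^k(\B{x}_{S^*},y)}{p^k(\B{x}_{S^*})} \;=\; p^k(y\given\B{x}_{S^*}).
\]
Now I would invoke (A2): $p^k(y\given\B{x}_{S^*})$ is the density of $\alpha^t\B{x}_{S^*}+\epsilon$, i.e. $y\mapsto p_\epsilon(y-\alpha^t\B{x}_{S^*})$ with $p_\epsilon$ the density of the common noise $\epsilon$, which does not depend on $k$; setting $q(y\given\B{x}_{S^*},\B{x}_N):=p_\epsilon(y-\alpha^t\B{x}_{S^*})$ gives (ii). One could instead invoke only (A1), which already asserts that $p^k(y\given\B{x}_{S^*})$ is common to all tasks considered.

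The argument has no real obstacle; the only points requiring a little care are measure-theoretic. Conditional densities are defined only up to $\B{x}_{S^*}$-null sets, but the division above is carried out on $\{p^k(\B{x}_{S^*})>0\}$ and $q^k$ vanishes off that set, so nothing is lost. One should also note that (ii) genuinely requires the conditional $Y\given\B{X}_{S^*}$ to be invariant for the test task $T$ as well as for the training tasks; in the MTL setting considered here this is exactly what (A1)/(A2) provide, since $T$ is itself one of the tasks among which invariance is assumed (AMTL: $T=D$; SMTL: all tasks), and if one wished to include a genuinely new test task $T=D+1$ one would additionally invoke (A1'). Finally, it is worth remarking that the $q$ produced by this construction does not actually depend on $\B{x}_N$, which is consistent with the statement, as (ii) only asks for a common function of $(\B{x}_{S^*},\B{x}_N,y)$.
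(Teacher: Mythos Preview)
Your argument is correct and proves the proposition as stated, but your construction differs from the paper's in a way that is worth highlighting. You set
\[
q^k(\B{x}_{S^*},\B{x}_N,y) \;=\; p^k(\B{x}_{S^*},y)\,p^k(\B{x}_N\given\B{x}_{S^*}),
\]
making $\B{X}_N$ conditionally independent of $Y$ given $\B{X}_{S^*}$ under $\BB{Q}^k$. The paper instead takes
\[
q^k(\B{x}_{S^*},\B{x}_N,y) \;=\; p^k(\B{x}_{S^*},y)\,p^T(\B{x}_N\given\B{x}_{S^*},y),
\]
i.e.\ it borrows the $\B{X}_N$-mechanism from the \emph{test} task and keeps its dependence on $y$. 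Property~(i) is immediate in both constructions; for~(ii) your route is shorter (the $p^k(\B{x}_N\given\B{x}_{S^*})$ factor cancels and you land directly on $p^k(y\given\B{x}_{S^*})$), whereas the paper needs a Bayes-rule manipulation to reduce $q^k(y\given\B{x}_{S^*},\B{x}_N)$ to an expression involving only $p^T(\B{x}_N\given y,\B{x}_{S^*})$ and the invariant $p^k(y\given\B{x}_{S^*})$.

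What each approach buys: your construction is more elementary and, as you observe, yields a common conditional $q(y\given\B{x}_{S^*},\B{x}_N)=p(y\given\B{x}_{S^*})$ that ignores $\B{x}_N$ entirely. The paper's construction is designed so that $\BB{Q}^T=\BB{P}^T$ exactly, hence the shared conditional is $q(y\given\B{x}_{S^*},\B{x}_N)=p^T(y\given\B{x}_{S^*},\B{x}_N)$, the very object one wants to estimate in the test task. This extra property is not part of the proposition's statement, but it is precisely what motivates the missing-data/EM procedure in Section~\ref{sec:MTL}: with your $\BB{Q}^k$'s the ``joint regression of $Y$ on $(\B{X}_{S^*},\B{X}_N)$'' would simply recover $\beta^{CS(S^*)}$ and bring no benefit from $\B{X}_N$. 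So your proof is valid for the proposition, and arguably cleaner, but the paper's choice of surrogate is the one that makes the subsequent method meaningful.
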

The proof for Proposition~\ref{prop:mtlok} can be found in Appendix~\ref{proof:mtlok}.
Following the previous intuition, for the training tasks $k\in\{1,\ldots,D\}$, we hide the data of $\B{X}^k_N$ and pretend the data in each task $k\in\{1,\ldots,D,T\}$ come from $\BB{Q}^k$. Note that some of the data are only missing for the training tasks. More precisely, $\B{X}_N^k$ is missing for $k\in\{1,\ldots,D\}$, while because of (i) in Proposition~\ref{prop:mtlok}, $(\B{X}_{S^*}^k,Y^k)$ is available for all tasks~$k\in\{1,\ldots,D,T\}$. We thus pool the data and learn a regression model of $Y$ versus $(\B{X}_{S^*}, \B{X}_N)$ by maximizing the likelihood of the observed data.

We formalize the problem as follows. Let $(\B{Z}_i)_{i=1}^n = (\B{X}_{S^*,i}, \B{X}_{N,i}, Y_i)_{i=1}^n$ be a pooled sample of the available data from the training tasks and the test task, in which $\B{X}_{N,i}$ is considered missing if $\B{X}_i$ is drawn from one of the training tasks. Here, $n=\sum_{k=1}^T n_k$ is the total number of training and test examples. Denote by $\B{Z}_{obs,i}$ the components of $\B{Z}_i$ which are not missing. In particular, $\B{Z}_{obs,i} = \B{Z}_i$ if $i$ is drawn from the test task and $\B{Z}_{obs,i} = (\B{X}_{S^*,i},Y_i)$ otherwise. Moreover, let $\Sigma$ be a $(p+1)\times(p+1)$ positive definite matrix, and $\Sigma_{i}$ is the submatrix of $\Sigma$ which corresponds to the observed features for example~$i$. If example $i$ is drawn from a training task, $\Sigma_i$ is of size $(|S^*|+1) \times (|S^*|+1)$, and $(p+1)\times (p+1)$ otherwise. The log-likelihood based on the observed data for matrix $\Sigma$ satisfies:

\begin{equation}
  \ell(\Sigma)  = \mbox{const} - \frac{1}{2}\sum_{i=1}^n\det\left(\Sigma_{i}\right)-\frac12 \B{Z}_{obs,i}^T\Sigma_{i}^{-1}\B{Z}_{obs,i},
  \label{eq:obsll}
\end{equation}
and our goal is to find $\Sigma$ which maximizes~\eqref{eq:obsll}. This model for the likelihood assumes that the data is multi-variate Gaussian with covariance matrix $\Sigma$.

When all data are observed, the least squares estimator~$\beta^{opt}$ can be seen as the result of a two step procedure.
First,~\eqref{eq:obsll} is maximized for the sample covariance matrix. Then, one computes the conditional mean $\mathbb{E}[Y\,|\,\B{X} = \B{x}]$ 
of the estimated joint distribution of ($\B{X}$,$Y$). In the case of missing data, however, the sample covariance matrix does no longer maximize~\eqref{eq:obsll}, see paragraph \textit{`A naive estimator for comparison'} below. Instead, we maximize~\eqref{eq:obsll} using EM.

Chapter 11 in~\citet{rubinmiss86} provides the update equations for optimizing Equation~\eqref{eq:obsll} using EM. More precisely, given an estimate $\Sigma^r$ of the covariance matrix at step $r$, the algorithm goes as follows.

\textbf{E step:} For an example $i$, we define

\[
  \B{Z}_i^r:=\begin{cases}
                \B{Z}_i \mbox{ if example }i\mbox{ is from the test task,}\\
                \left(\B{X}_{S^*,i}, \BB{E}(\B{X}_{N}^r\given \B{Z}_{obs,i}), Y_i\right)\mbox{ otherwise}.
            \end{cases}
\]

Here, we are essentially imputing the data for $\B{X}_N$ in the training tasks by the conditional mean given the observed data, using the current estimate of the covariance matrix $\Sigma^r$. The conditional expectation is computed using the current estimate $\Sigma^r$ and the Gaussian conditioning formula:
$$\BB{E}(\B{X}_{N}^r\given \B{Z}_{obs,i}) = \Sigma_{NZ_{obs}}^r(\Sigma_{Z_{obs}}^r)^{-1}\B{Z}_{obs,i},$$
where $\Sigma_{NZ_{obs}}^r$ is the submatrix of $\Sigma^r$ corresponding to the cross-covariance between $\B{X}_N$ and $(\B{X}_{S^*},Y)$, and $\Sigma_{Z_{obs}}^r$ is the submatrix corresponding to the covariance of $(\B{X}_{S^*},Y)$. For examples from the test task, we simply copy the example, since $\BB{P}^T = \BB{Q}^T$.
Moreover, define
\[
  C_{N,i}^r:=\begin{cases}
                0 \mbox{ if example }i\mbox{ is from the test task,}\\
                \mbox{Cov}(\B{X}_N^r\given \B{Z}_{obs,i}) = \Sigma_N^r-\Sigma_{NZ_{obs}}^r(\Sigma_{Z_{obs}}^r)^{-1}\Sigma_{Z_{obs}N}^r\mbox{ otherwise}.
            \end{cases}
\]

\textbf{M step:} compute the sample covariance given the imputed data:
$$
\Sigma^{r+1} = \frac1n \BB{E}\left(\sum_{i=1}^n \B{Z}_i^r (\B{Z}_i^r)^t \given \B{Z}_{obs,i},\Sigma^r \right) = \frac1n \sum_{i=1}^n \B{Z}_i^r (\B{Z}_i^r)^t + C_{i}^r,
$$
where $C_i^r$ is a $(p+1)\times (p+1)$ matrix whose submatrix corresponding to features in $N$ is $C_{N,i}^r$, and the remaining elements are $0$. The intuition for the M step is simple: we compute the sample covariance with the values imputed for $\B{X}_N$. Since these values are being imputed, matrix $C$ adds uncertainty for the corresponding values.

Once the algorithm has converged, we can read off the regression coefficient from the joint covariance matrix as $\BB{E}[Y \given \B{X}_{S^*} = \B{x}_{S^*}]$. The whole procedure is initialized with the sample covariance matrix computed with the available labeled sample from $T$.

\paragraph{Incorporating unlabeled data}
The previous method also allows us to incorporate unlabeled data from the test task. Indeed, assume that an unlabeled sample $\B{X}^T = (\B{X}_{S^*}^T, \B{X}_N^T)$ from the test task is also available at training time. This can be incorporated in the previous framework since the label $Y$ can be considered to be missing (as opposed to $\B{X}_N^T$ previously). We can then write $\B{Z}_i^r = (\B{X}_{S^*,i}, \B{X}_{N,i},\BB{E}(Y_i^r\given \B{Z}_{obs,i}))$ for the unlabeled data, thus imputing the value of $Y$ in in the E-step by the conditional mean given $(\B{X}_{S^*,i}, \B{X}_{N,i})$. The added covariance is then $C_{Y,i}^r = \mathrm{Var}(Y)^r-\Sigma_{YZ_{obs}}^r(\Sigma_{Z_{obs}}^r)^{-1}\Sigma_{Z_{obs}Y}^r$. The rest of the algorithm remains unchanged. 

\paragraph{A naive estimator for comparison}
In the population setting, Proposition~\ref{prop:combinelabels} in Appendix~\ref{sec:prop_combl} provides an expression for $\beta^{opt}$ as a function of $\alpha$ and $\epsilon$ from Assumption (A2). As in the previous paragraph, one could try to estimate the covariance matrix of $(\B{X},Y)$ using the knowledge of $\alpha$ and $\epsilon$ from the training tasks, and then read off the regression coefficients. 
In the presence of a finite amount of labeled and unlabeled data from the test task, a naive approach would thus plug in the knowledge of $\alpha$ and $\epsilon$ as follows:
the entries of
$\hat \Sigma_{\B{X},Y}$
that correspond to 
the covariances between 
$\B{X}_{S^*}$ and $Y$ are replaced with
$\hat \Sigma_{\B{X}_{S^*}} \cdot \alpha$, and the entry corresponding to the variance of $Y$ is replaced by $\alpha^t\hat \Sigma_{\B{X}_{S^*}}\alpha + \mathrm{Var}(\epsilon)$. 
This, however, often performs worse than forgetting about $\alpha$ and using the data in the test domain only, see Figure~\ref{fig:synt_numEx_mtl} (left). 
Why is this the case?
The naive solution described above leads to a matrix $\Sigma$ that does not only {\it not} maximize~\eqref{eq:obsll} but that often is not even positive definite. 
One needs to optimize over the free parameters of $\Sigma$, which corresponds to the covariance between $\B{X}_N$ and $Y$, given the constraint of positive definiteness. For comparison, we modified the naive approach as follows. First, we find a positive definite matrix satisfying the desired constraints. In order to do this, we solve a semi-definite Program (SDP) with a trivial objective which always equals zero. Then, we maximize the likelihood~\eqref{eq:obsll} over the free parameters of $\Sigma$ with a Nelder-Mead simplex algorithm. The constrained optimization problem can be shown to be convex in the neighborhood of the optimum \cite[][Sec. 3]{Zwiernik2014} if the number of data in the test domain grows. While gradients can be computed for this problem, gradient-based methods seem to perform poorly in practice (experiments are not shown for gradient based methods).

In an idealized scenario, infinite amount of unlabeled data in the test and labeled data in the training tasks could provide us with $\Sigma_{\B{X}}$, $\Sigma_{(\B{X}_{S^*},Y)}$ and $\mathrm{Var}(Y)$. We could then plug in these values into $\Sigma$ and optimize over the remaining parameters, see $\beta^{CS(cau+, i.d.)}$ in Figure~\ref{fig:synt_numEx_mtl} (left). 
In practice, we have to estimate $\Sigma_{\B{X}}$, $\Sigma_{(\B{X}_{S^*},Y)}$ and $\mathrm{Var}(Y)$ from data. Thus, the EM approach mentioned above constitutes the more principled approach.  

\subsection{Relation to causality} {\label{sec:causality}}
In this section, we provide a brief introduction to causal notions in order to motivate our method. More specifically, we show that under some conditions, the set $S^*$ of causal parents verifies Assumptions~(A1) and (A1').
Structural equation models (SEMs) \citep{pearl_causality:_2009} are one possibility to formalize causal statements. We say that a distribution over random variables $\B{X}  = (X_1, \ldots, X_p)$ is induced by a structural equation model with corresponding graph $\C{G}$ if each variable $X_j$ can be written as a deterministic function of its parents $\PA[\C{G}]{j}$ (in $\C{G}$) and some noise variable $N_j$:
\begin{equation} \label{eq:sem}
X_j = f_j(X_{\PA[\C{G}]{j}}, N_j)\,, \quad j = 1, \ldots, p\,.
\end{equation}
Here, the graph is required to be acyclic and the noise variables are assumed to be jointly independent. 
An SEM comes with the ability to describe \emph{interventions}. Intervening in the system corresponds to replacing one of the structural equations~\eqref{eq:sem}. The resulting joint distribution is called an intervention distribution. 
Changing the equation for variable $X_j$ usually affects the distribution of its children for example, but never the distribution of its parents. 
Consider now an SEM over variables $~(\B{X}, Y)$. Here, we do not specify the graphical relation between $Y$ and the other nodes: $Y$ may or may not have children or parents.
Suppose further that the different tasks $\BB{P}^1, \ldots, \BB{P}^D$ are intervention distributions of an underlying SEM with graph structure $\C{G}$. If the target variable has not been intervened on, then the set~$~S^*:= ~\PA[\C{G}]{Y}$ satisfies Assumptions~(A1) and~(A1'). 
This means that as long as the interventions will not take place at the target variable, the set $S^*$ of causal parents will satisfy Assumptions~(A1) and~(A1').

Recently, \citet{peters_causal_2015}
have given several sufficient conditions for the identifiability of the causal parents in the linear Gaussian framework. 
E.g., if the interventions take place at informative locations, or if we see sufficiently many different interventions, the set of causal parents is the \emph{only} set $S^*$ that satisfies Assumptions~(A1) and~(A1'). If there exists more than one set leading to invariant predictions, they consider the intersection of all such subsets. 
In this sense, seeing more environments helps for identifying the causal structure.
In this work, we are interested in prediction rather than causal discovery. Therefore, we try to find a trade-off between models that predict well and invariant models that generalize well to other domains. That is, in the DG setting, we are interested in the subset which leads to invariant conditionals and minimizes the prediction error across training tasks.

If the tasks $\mathbb{P}^k$ correspond to interventions in an SEM, we may construct an extended SEM with a parent-less environment variable $E$ that points into the intervened variables. Then, $\mathbb{P}^k$ equals the distribution of 
$(\B{X}, Y) \,|\,E = k$, see \cite[][Appendix~C]{peters_causal_2015}.
If the distribution of $(\B{X}, Y, E)$ is Markov and faithful w.r.t. the extended graph, the smallest set $S$ that leads to invariant conditionals and to best prediction is a subset of the Markov blanket of $Y$: certainly, it contains all parents of $Y$; if it includes a descendant of $Y$, this must be a child of $Y$ (which yields better prediction and still blocks any path from $Y$ to $E$); analogously, any contained ancestor of a child of $Y$ must be a parent of that child.

\section{Learning invariant conditionals}{\label{sec:learnInv}}
In the previous section, we have seen how a known invariant subset $S^*\subseteq \{1,\ldots,p\}$ of predictors leading to invariant conditionals $Y^k\given\B{X}_{S^*}^k$, see Assumptions~(A1) and (A1'), can be beneficial in the problems of DG and MTL. In practice, such a set $S^*$ is often unknown. 
We now present a method that aims at \textit{inferring} an invariant subset from data. Throughout this paper, we denote by $S$ any subset of features, while $S^+$ is an invariant set (which is not necessarily unique) for which (A1) holds. Such a subset $S^+$ \emph{does not necessarily satisfy both Assumptions (A1) and (A1')}. Indeed, in DG, only (A1) is testable in the training data. More precisely, if several invariant sets which satisfy (A1) are found, and only some of them satisfy (A1'), we cannot find these from data. We therefore have to add a criterion allowing us to select among several invariant sets. 
The method we propose provides an estimator $\hat{S}$ for an invariant subset $S^+$, which is chosen as the subset satisfying Assumption~(A1) which maximizes predictive accuracy on a validation set.  
In MTL, we still write $S^+$, even if we could then write $S^*$ as (A1') becomes testable. 
It is summarized in Algorithm~\ref{alg},
code is provided in \url{https://github.com/mrojascarulla/causal\_transfer\_learning}.

\subsection{Our method.}
\RestyleAlgo{boxruled}
\LinesNumbered
\begin{algorithm}[ht]
  \caption{Subset search\label{alg}}
  \nonl \textbf{Inputs:} Sample $(\B{x}_i^k,y_i^k)_{i=1}^{n_k}$ for tasks $k\in\{1,\ldots,D\}$, threshold $\delta$ for independence test. 

  \nonl \textbf{Outputs: } Estimated invariant subset $\hat{S}$.

Set $S_{acc} = \{\}$, $\mathrm{MSE} = \{\}$. 

\For{$S\subseteq \{1,\ldots,p\}$}{
  linearly regress $Y$ on $\B{X}_S$ and compute the residuals $R_{\beta^{CS(S)}}$ on a validation set. 

  compute $H = \mathrm{HSIC}_b\left((R_{\beta^{CS(S)},i},K_i)_{i=1}^{n}\right)$ and the corresponding p-value $p^*$ (or the p-value from an alternative test, e.g., Levene test.).

  \If{$p^* > \delta$}{ 

  compute $\widehat{\C{E}}_{\BB{P}^{1,\ldots,D}}(\beta^{CS(S)})$, the empirical estimate of $\C{E}_{\BB{P}^{1,\ldots,D}}(\beta^{CS(S)})$ on a validation set. 

  ${S}_{acc}.\mathrm{add}(S)$,
    $\mathrm{MSE}.\mathrm{add}(\widehat{\C{E}}_{\BB{P}^{1,\ldots,D}}(\beta^{CS(S)}))$} 
}
  Select $\hat S$ according to \textit{RULE}, see Section~\ref{sec:cvMTL}.
\end{algorithm}
\RestyleAlgo{boxruled}
\LinesNumbered
\begin{algorithm}[ht]
  \caption{Greedy subset search\label{alg:greedy}}
  \nonl \textbf{Inputs:} Sample $(\B{x}_i^k,y_i^k)_{i=1}^{n_k}$ for tasks $k\in\{1,\ldots,D\}$, threshold $\delta$ for independence test. 

  \nonl \textbf{Outputs: } Estimated invariant set $\hat{S}{greedy}$.

  Set $S_{acc} = \{\}$, $\hat{S}_{current}\{\}$, $\mathrm{MSE} = \{\}$. 

  \For{$i\in\{1,\ldots,n_{iters}\}$}{
    Set $stat_{min} = \infty$.
    
  \For{$S\in\C{S}_{\hat{S}_{current}}$}{
    
  linearly regress $Y$ on $\B{X}_S$ and compute the residuals $R_{\beta^{CS(S)}}$ on a validation set. 

  compute $H = \mathrm{HSIC}_b\left((R_{\beta^{CS(S)},i},K_i)_{i=1}^{n}\right)$ and the corresponding p-value $p^*$ (or the p-value from an alternative test, e.g. Levene test.).

\If{$p^* > \delta$}{
  compute $\widehat{\C{E}}_{\BB{P}^{1,\ldots,D}}(\beta^{CS(S)})$, the empirical estimate of $\C{E}_{\BB{P}^{1,\ldots,D}}(\beta^{CS(S)})$ on a validation set.

  $S_{acc}.\mathrm{add}(S)$,  $\mathrm{MSE}.\mathrm{add}(\widehat{\C{E}}_{\BB{P}^{1,\ldots,D}}(\beta^{CS(S)}))$, 
  
  set $\hat{S}_{current} = S$.
}
\ElseIf{$H<stat_{min}$}{
  set $\hat{S}_{current} = S$, $stat_{min}=H$.
  }
}
 }
 Select $\hat S$ according to \textit{RULE}, see Section~\ref{sec:cvMTL}.   
\end{algorithm}
Consider a set of $D$ tasks, a target variable $Y^k$ and a vector $\B{X}^k$ of~$~p$ predictor variables in task~$k$. 
For $\beta\in\BB{R}^p$, we define the residual in task $k$ as:
\begin{equation}
R^k_{\beta} =  Y^k- \beta^t \B{X}^k, \quad k\in\{1,\ldots,D\}.
\label{eq:residuals}
\end{equation} 
By Assumptions~(A1) and~(A2), there exists a subset $S^+$ and some vector $\beta^{CS(S^+)}$ such that for all $j\notin S^+$, $\beta_j^{CS(S^+)} = 0$ and $R_{\beta^{CS(S^+)}}^1 \overset{d}{=} \ldots \overset{d}{=} R_{\beta^{CS(S^+)}}^D$. Such a set $S^+$ is not necessarily unique. As stated in \citep{peters_causal_2015}, the number of invariant subsets decreases as more different tasks are observed at training time.
We propose to do an exhaustive search over subsets $S$ of predictors and statistically test for equality of the distribution of the residuals in the training tasks, see the section below. Among the accepted subsets, we select the subset $\hat{S}$ which leads to the smallest error on a validation set. This is a fundamental difference to the method proposed by \citet{peters_causal_2015}. Indeed, while our method addresses the transfer problem, \citet{peters_causal_2015} is about causal discovery. Algorithm~\ref{alg} finds an invariant subset which also leads to the lowest validation error. This subset may contain covariates which are non causal, see Section~\ref{sec:inform} for further details. On the other hand,~\citet{peters_causal_2015} estimate the causal parents (with coverage guarantee). Such an approach has a different purpose and performs very badly both in DG and MTL: e.g., when all tasks are identical, it uses the empty set as predictors, while our method selects the full set of predictors. 

In Section~\ref{subsec:scala}, we propose two solutions for when the number of predictors $p$ is too large for an exhaustive search: a greedy method and variable selection. 
While the algorithms are presented using linear regression, the extension to a nonlinear framework is straightforward. In particular, linear regression can be replaced by a nonlinear regression method.

\subsection{Statistical tests for equality of distributions.}
In order to test whether a subset $S$ leads to invariant conditionals, we can use a statistical test to check whether the residuals $R^k_{\beta^{CS(S)}}$ have the same distribution in all tasks $k\in\{1,\ldots,D\}$. We propose two possible methods. 

For Gaussian data, one can use a Levene test~\citep{levene1960robust} to test whether the residuals have the same variance in all tasks; their means are zero as long as an intercept is included in the regression model.

As an alternative, we propose a nonparametric $D$-sample test by testing whether the residuals are independent of the task index. This test is a direct application of HSIC~\citep{gretton2007kernel} 
but to our knowledge, is novel. 
Suppose that the index of the task can be considered as a random variable $K$. We consider the sample $Z = (R_{\beta^S,i},K_i)_{i=1}^{n}$ as drawn from a joint distribution over residuals and task indices, where $n=\sum_{k=1}^D n_k$ and $K_i\in\{1,\ldots,D\}$ is a discrete value indicating the index of the corresponding task. The residuals have the same distribution in all training tasks if and only if $R_{\beta^S}$ and $K$ are independent.  
Two characteristic kernels are used: a kernel $\kappa$ is used for embedding the residuals and a trivial kernel $d$ such that $d(i,j) = \delta_{ij}$ is used for $K$. 
Let therefore $\mathrm{HSIC}(R_{\beta^S},K)$ denote the value of the HSIC~\citep{gretton2007kernel} between $R_{\beta^S}$ and $K$, and let $\mathrm{HSIC}_b(Z)$ be the corresponding test statistic. A subset $S$ is accepted if if leads to accepting the null hypothesis of independence between $R_{\beta^S}$ and $K$ at level $\delta$. 

Both in the case of the Levene test and the $D$-sample test, 
the test outputs a p-value $p^*$, and we accept the null $H_0$ if $p^*>\delta$. Among these accepted subsets, we output the set $\hat{S}$ which leads to the smallest loss on a validation set. The test level $\delta$ is given as an input to our method and allows for a trade-off between predictive accuracy and exploiting invariance. As $\delta$ tends to zero, the null is accepted for all subsets
and we then select all features, which is equivalent to covariate shift. When $\delta$ approaches one, no subset is accepted as invariant. Our method then reduces to the mean prediction.  In order to compute p-values, a Gamma approximation is used for the distribution of $\mathrm{HSIC}_b(Z)$ under the null.

For non-additive models, one may even apply a conditional independence test~\citep[e.g.,][]{zhang_kernel-based_2011, fuku_kernel_dependence_2008} to test whether $K$ is independent of $Y\given \mathbf{X}_{S}$.

\subsection{Scalability to a large number of predictors} \label{subsec:scala}
When the number of features $p$ is large, full subset search is computationally not feasible. We propose two solutions for this scenario. 
If one has reasons to believe that the signal is sparse, that is the true set $S^*$ is small, one may use a {\bf variable selection} technique such as the Lasso~\citep{TibshiraniLasso}  as a first step.
Under the assumptions described in Section~\ref{sec:causality}, we know that the invariant set with the best prediction in the training tasks can be assumed to be a subset of the relevant features 
(which here equals the Markov blanket of $Y$).
Thus, if variable screening is satisfied ,i.e., one selects all relevant variables and possibly more,
the pre-selection step does not change the result of Algorithm~1 in the limit of infinitely many data. 
For linear models with $\ell_1$ penalization, variable screening is a well studied problem, see, e.g., compatibility and $\beta_{min}$ conditions \cite[][Chapter 2.5]{Buhlmann2011book}.

Alternatively, one may perform a {\bf greedy search} over subsets when full subset search is not feasible. 
Denote by $\C{S}_S$ the collection of neighboring sets of a set $S$ obtained by adding or removing exactly one predictor in $S$. If no subset has been accepted at a given iteration, we select the neighbor leading to the smallest test statistic. 
If a neighbor is accepted, we select the one which leads to the smallest training error. We start with the $p$ subsets with only one element, and allow to add or remove a single predictor at each step, see Algorithm~\ref{alg:greedy}. As often for greedy methods, there is no theoretical guarantee. 

\begin{figure}
\begin{tikzpicture}[scale=1.2, yscale=0.7, line width=0.5pt, inner sep=0.3mm, shorten >=1pt, shorten <=1pt]
\scriptsize
  \draw (1,2.5) node(x1) [circle, draw] {$X_1$};
  \draw (0.3,2) node(x2) [circle, draw] {$X_2$};
  \draw (1,1.5) node(x3) [circle, draw] {$X_3$};
  \draw (2,2) node(y) [circle, draw] {$\;Y\;$};
  \draw (3.7,2) node(x5) [circle, draw] {$X_5$};
  \draw (4,1) node(xw) [circle] {$ $};
  \draw (0,1) node(xe) [circle] {$ $};
  \draw[-arcsq] (x1) -- (y);
  \draw[-arcsq] (x2) -- (y);
  \draw[-arcsq] (x3) -- (y);
  \draw[-arcsq] (y) -- (x5);
\end{tikzpicture}
\caption{Example of a directed acyclic graph, see Section~\ref{sec:causality}. If $Y$ is not intervened on, the conditional $Y\given X_1, X_2, X_3$ remains invariant. 
\label{sdag}}
\end{figure}
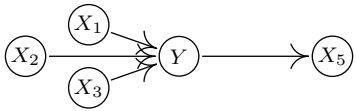
\subsection{Subset selection in MTL}\label{sec:cvMTL}
In DG, among the accepted subsets, we select the set $\hat S$ which leads to the lowest validation error. In MTL, however, a labeled sample from the test task $T$ is available at training time. Therefore, Algorithm~\ref{alg} is slightly modified. First, we get all the sets for which $H_0$ is accepted. Then, we select the accepted set $\hat S$ which leads to the smallest 5 fold cross validation error. For each subset, we compute the least squares coefficients using the procedure described in Section~\ref{sec:MTL}, and measure the prediction error on the held out validation set. Using the notation of Algorithm~\ref{alg}, let $S_{acc}$ be the set of subsets accepted as invariant, and let $MSE$ be the set of their corresponding squared errors on the validation set. The following rules are used for selecting an invariant set in DG and MTL.
\begin{compactitem}
\item[i)] \textbf{\textit{RULE} for DG}: Return $\hat S = S_{acc}[\argmin \mathrm{MSE}]$.
\item[ii)] \textbf{\textit{RULE} for MTL}: Define $CV_{acc} = \{\}$. For each set $S\subseteq S_{acc}$, do $CV_{acc}.\mathrm{add}(CV_S)$, where $CV_S$ is the 5-fold cross validation error over the labeled test data obtained by optimizing~\eqref{eq:obsll} using EM with subset $S$.

Return $\hat S = S_{acc}[\argmin CV_{acc}]$.
\end{compactitem}

Given a set of $k\in\{1,\ldots,T\}$ training tasks, a collection of sets $\hat{S}_1,\ldots,\hat{S}_u$ 
(eventually empty) is obtained, all of which lead to accepting the null hypothesis of invariance 
between the training tasks in DG.  
Our methods use the MSE on a validation set as a criterion for selecting a subset among these $u$ 
candidates. This is a design choice which is dependent on the specific application, and can be modified. 
For instance, if being conservative is important, the MSE may be an inappropriate choice. 
One may be then interested in combining confidence intervals for the accepted sets. 
One idea is to consider all accepted sets at the same time, one of which is, with probability $1-\alpha$, the set $S^*$ from Assumption (A1'). These sets yield different predictions, one of which stems from $S^*$, again, with probability $1-\alpha$. In some settings, it might be helpful to output the whole set of predictions. 
If one is interested in confidence intervals, these may be combined by taking its union. 
  \citet{heinze16} discuss this idea in the context of prediction under interventions. 

\begin{table}
\small
\begin{tabularx}{\columnwidth}{l|X}
estimator &  description\\\hline
$\beta^{CS(cau)}$ & Linear regr.\ with true causal predictors (often unknown in practice). \\
$\beta^{CS(\hat{S})}$& Finding the invariant set $\hat{S}$ using full subset search and performing lin.\ regr.\ using predictors in $\hat{S}$. $\hat{S}greedy$ corresponds to finding the invariant set using a greedy procedure. $\hat{S}Lasso$ corresponds to doing variable selection using Lasso as a first step, then doing full subset search on the selected features.\\
$\beta^{CS}$ & Pooling the training data and using linear regr.\\
$\beta^{CS(\hat{S}+)}$& Finding the invariant set $\hat{S}$ using full subset search and solve the optimization problem described in 'A naive estimator for comparison'.\\
$\beta^{CS(\hat{S}\sharp)}$& Finding the invariant set $\hat{S}$ using full subset search and maximizing~\eqref{eq:obsll} for MTL using EM.\\
$\beta^{mean}$ & Pooling the training data and outputting the mean of the target.\\
$\beta^{dom}$ & Linear regression using only the available labeled sample from $T$.\\
$\beta^{MTL}$ & Multi-task feature learning estimator~\citep{argyriou_multi-task_2006}.\\
$\beta^{DICA}$ & DICA~\citep{muandet2013domain} with rbf kernel. \\
$\beta^{mDA}$ & Pooling the training data and an unlabeled sample from $T$, learning features using mSDA~\citep{Chen2012} with one layer and linear output, then using linear regr.
\end{tabularx}%
\caption{Estimators used in the numerical experiments. A '+' next to a subset $S$ corresponds to the method for MTL described in the last paragraph of Section~\ref{sec:MTL}.}
\label{tab:est}
\end{table}

\section{Experiments}{\label{sec:experiments}}
We compare our estimator to different methods, which are summarized in
Table~\ref{tab:est}. 
$\beta^{CS(cau)}$ uses the ground truth for $S^*$ when it is available, $\beta^{CS(\hat{S})}$ corresponds to full search using Algorithm~\ref{alg}, $\beta^{CS}$ uses the pooled training data, $\beta^{MTL}$ performs the Multi-task feature learning algorithm~\citep{argyriou_multi-task_2006} for the MTL setting and $\beta^{DICA}$  performs DICA~\citep{muandet2013domain} for DG. For DICA, which is a nonlinear method, the kernel matrices are constructed using an rbf kernel, and the length-scale of the kernel is selected according to the median heuristic. In the MTL setting, we combine the invariance with task specific information by optimizing~\eqref{eq:obsll} using EM, resulting in regression coefficients $\beta^{CS(\hat{S}\sharp)}$ and $\beta^{CS(cau\sharp)}$ when the ground truth is known. Finally, $\beta^{CS(cau\sharp,UL)}$ indicates that unlabeled data from $T$ was also available. For reference, Figure~\ref{fig:synt_numEx_mtl} (left) provides results for $\beta^{CS(\hat{S}+)}$ and $\beta^{CS(cau+)}$, which correspond to the estimators obtained by solving the constrained optimization problem described in the paragraph \textit{`A naive estimator for comparison'} of Section~\ref{sec:MTL} ($\beta^{CS(cau+)}$ uses the ground truth for $S^*$ and $\alpha$), while $\beta^{naive}$ imputes the covariance matrices but does not optimize the free parameters. $\beta^{CS(cau+, i.d.)}$ (infinite data) also assumes that we know the ground truth for the entries of the covariance matrix for the test task corresponding to the covariance of $\B{X}$, the covariance between $\B{X}_{S^*}$ and $Y$, and the variance of $Y$. 

\subsection{Synthetic data set}
In this section, we generate a synthetic data set in which the causal structure of the problem is known.
For all experiments, we choose $\delta = 0.05$ as a rejection level for the statistical test in Algorithms~\ref{alg} and \ref{alg:greedy}. 
Moreover, we use $40\%$ of the training examples to fit the linear models in Algorithms~\ref{alg} and \ref{alg:greedy}, 
and the remaining data as validation. 
The sensitivity to the choice of $\delta$ is discussed in Section~\ref{sec:delta}. 

\paragraph{Generative process of the data}
For each task $k \in \{1, 2, \ldots, D,T\}$, we sample a set of causal variables from a multivariate Gaussian 
$$\B{X}_{S^*}^k \sim \mathcal{N}(0,\Sigma_{{S^*}}^k)$$ 
where the covariance matrix $\Sigma_{S^*}^k$ is drawn from a Wishart distribution $\C{W}(U_{S^*}^k,p)$, where 
$U_{S^*}^k$ is computed as $V^k(V^k)^t$. Here, $V^k$ is a $(|S|,|S|)$ matrix of standard Gaussian random variables. 

The target variable $Y^k$ is drawn as 
$$Y^k = \alpha \B{X}_{S^*}^k + \epsilon^k$$ 
where $\epsilon^k \sim \mathcal{N}(0,2)$ (the standard deviation of $\epsilon^k$ is $6$ for the non sparse DG experiment with $30$ predictors, see the bottom of Figure~\ref{fig:synt_numEx}). 

We sample the remaining predictor variables as 
$$\B{X}_N^k = \gamma^k Y^k + \beta^k(\B{X}_{S^*}^k)_C+\eta^k$$ 
where $\eta^k \sim \mathcal{N}(0,\Sigma_N^k)$. $(\B{X}_{S^*}^k)_C$ is a subset of $\B{X}_{S^*}^k$ of size $|C|$ which generates both the target $Y^k$ and $\B{X}_NT^k$. 
$\gamma^k$ of size $|N|$ is computed as $\gamma^k = (1-\lambda)\gamma_0 + \lambda g^k$, where $\lambda\in[0,1]$, $\gamma_0$ is the same in all tasks while $g^k$ is task dependent. 
Both $\gamma_0$ and $g^k$ are drawn from a standard Gaussian. 
Similarly to $\gamma^k$, $\beta^k$ is a $(|C|, |N|)$ matrix computed as $\beta^k = (1-\lambda)\beta_0 + \lambda b^k$.
$\Sigma_N^k$ is sampled similarly to $\Sigma_{S^*}^k$. 
Finally, $\alpha$ is sampled from a standard Gaussian distribution.

The generative process and hyper-parameters are the same for all the experiments (DG and MTL). 

\paragraph{Results}
\begin{figure}[t]
  \centering
  \hspace{-1.cm}
\begin{subfigure}[b]{0.49\linewidth}
\includegraphics[width=\textwidth]{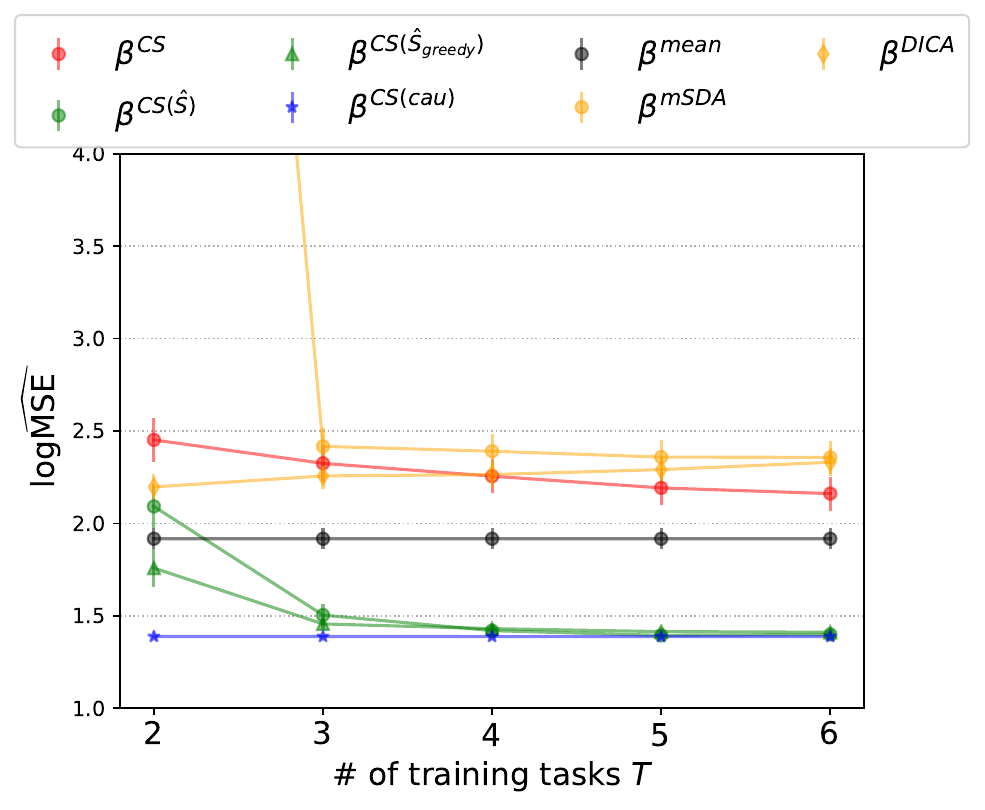}
\end{subfigure}
\hspace{0.15cm}
\begin{subfigure}[b]{0.50\linewidth}
\centering
\includegraphics[width=\textwidth]{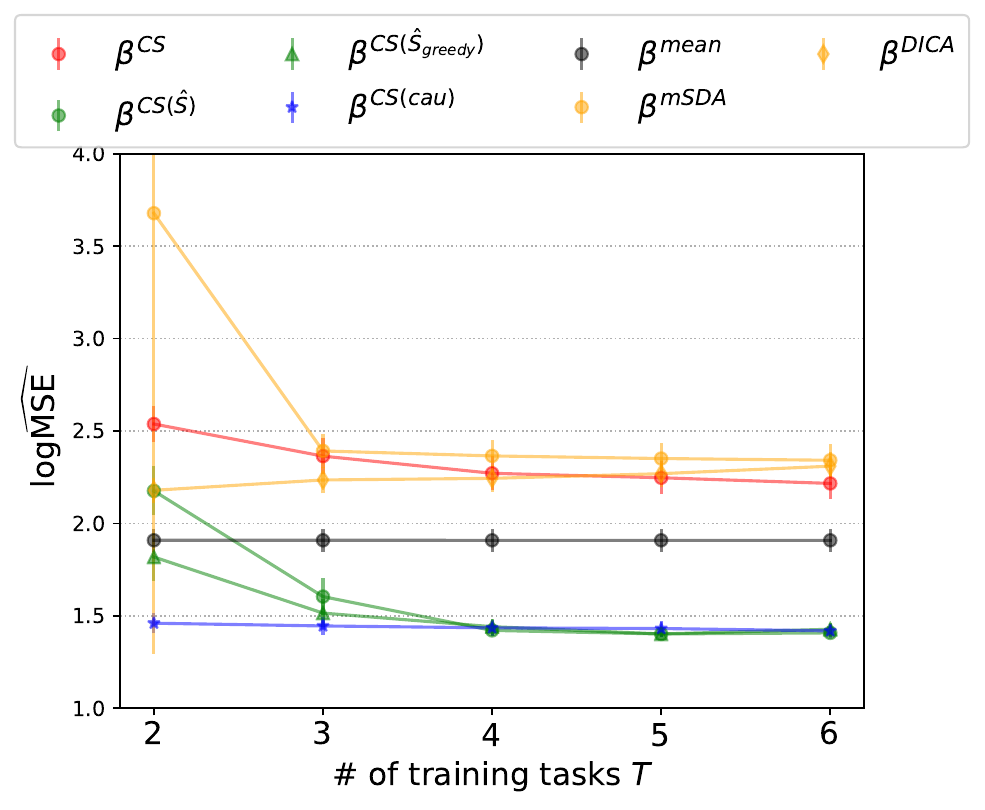}
\end{subfigure}
\hspace{0.15cm}
\begin{subfigure}[b]{0.54\linewidth}
\centering
\includegraphics[width=\textwidth]{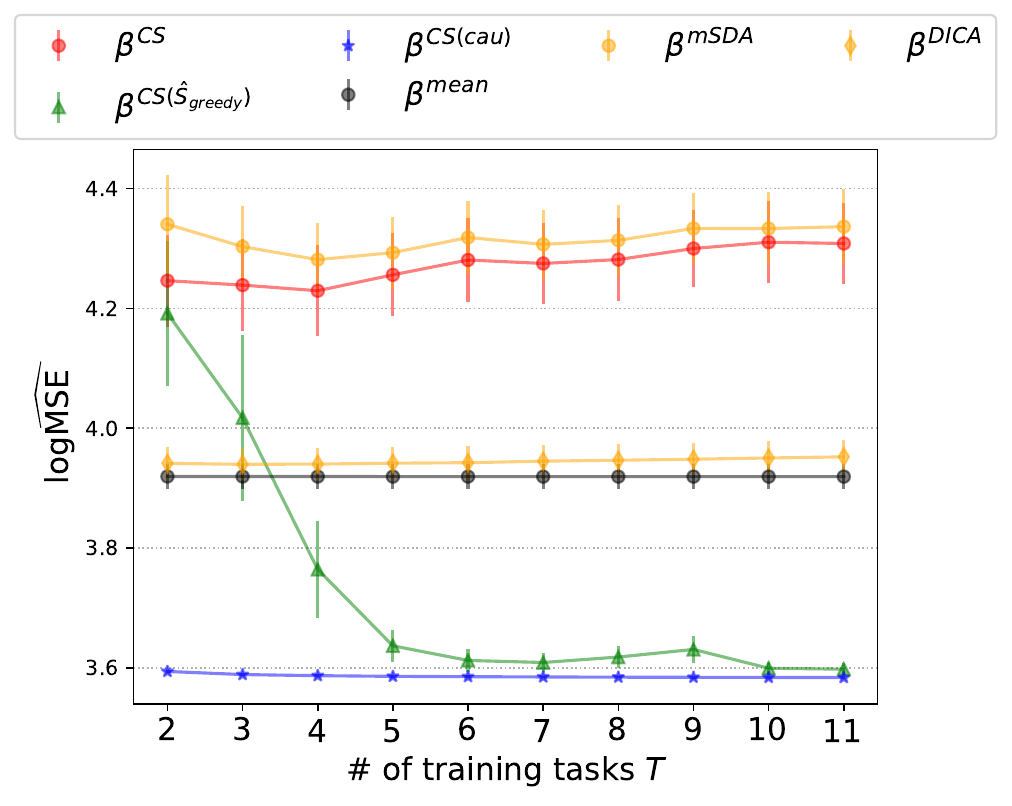}
\end{subfigure}
\caption{DG setting. Logarithm of the empirical squared error in the test task for the different estimators in the DG setting. The results show averages and $95\%$ confidence intervals for the mean performance over $100$ repetitions. 
We vary the number of tasks $D$ available at training time. 
Upper left: both $S$ and $N$ are of size~$3$, such that $\B{X}$ is $6$-dimensional. $|C|$ is of size one. 
Upper right: $30$ noise variables are added to $\B{X}$. Variable selection using the Lasso is used prior to computing $\beta^{CS(\hat{S})}$, while $\beta^{CS(\hat{S}greedy)}$ uses all predictors. 
Bottom: both $S$ and $N$ are of size~$15$. Full search is not computationally feasible in this setting 
and only the greedy procedure can be used. Other methods such as $\beta^{CS}$, $\beta^{mSDA}$ and $\beta^{DICA}$ often perform badly, which explains why in comparison $\beta^{mean}$ appears to performs well.  
}
  \label{fig:synt_numEx}
\end{figure}
\begin{figure*}[t!]
\begin{minipage}[b]{0.45\linewidth}
  \centering
  \hspace{-1.5cm}
\includegraphics[width=\textwidth]{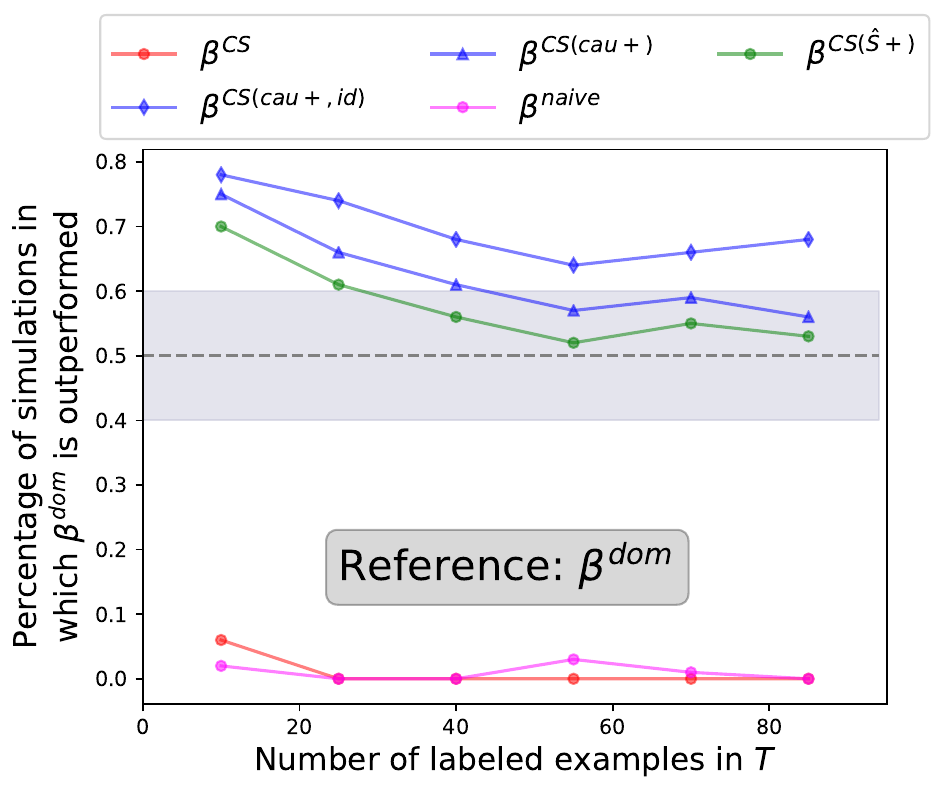}
\end{minipage}%
\hspace{-0.5cm}
\begin{minipage}[b]{0.45\linewidth}
\centering
\includegraphics[width=\textwidth]{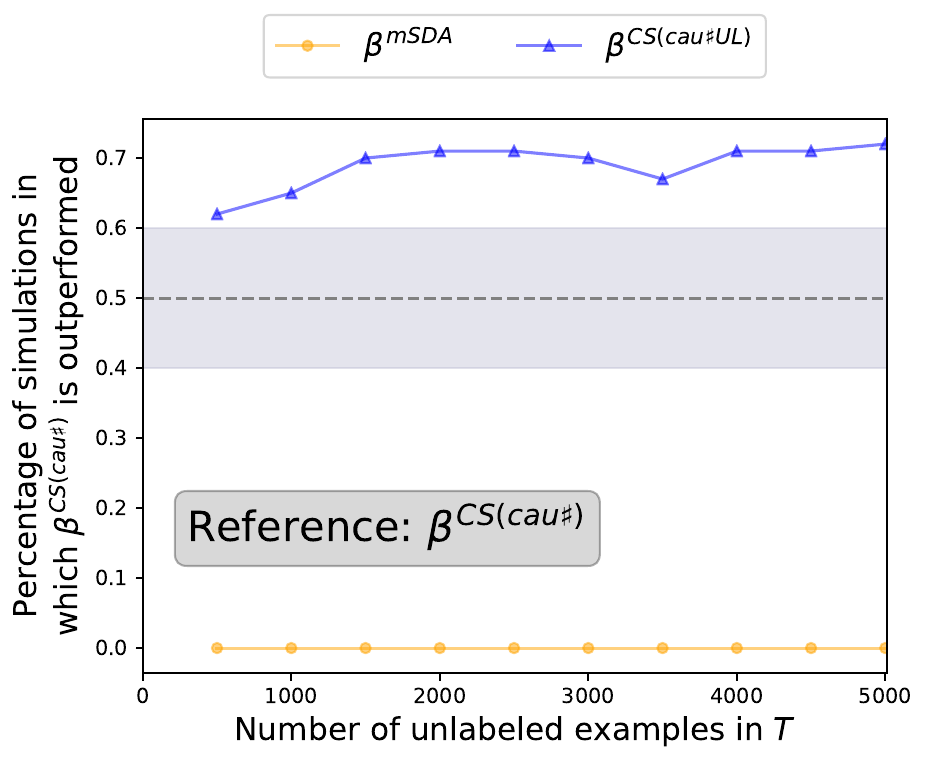}
\end{minipage}%
\hspace{-1.5cm}
\begin{minipage}[b]{0.45\linewidth}
\centering
\includegraphics[width=\textwidth]{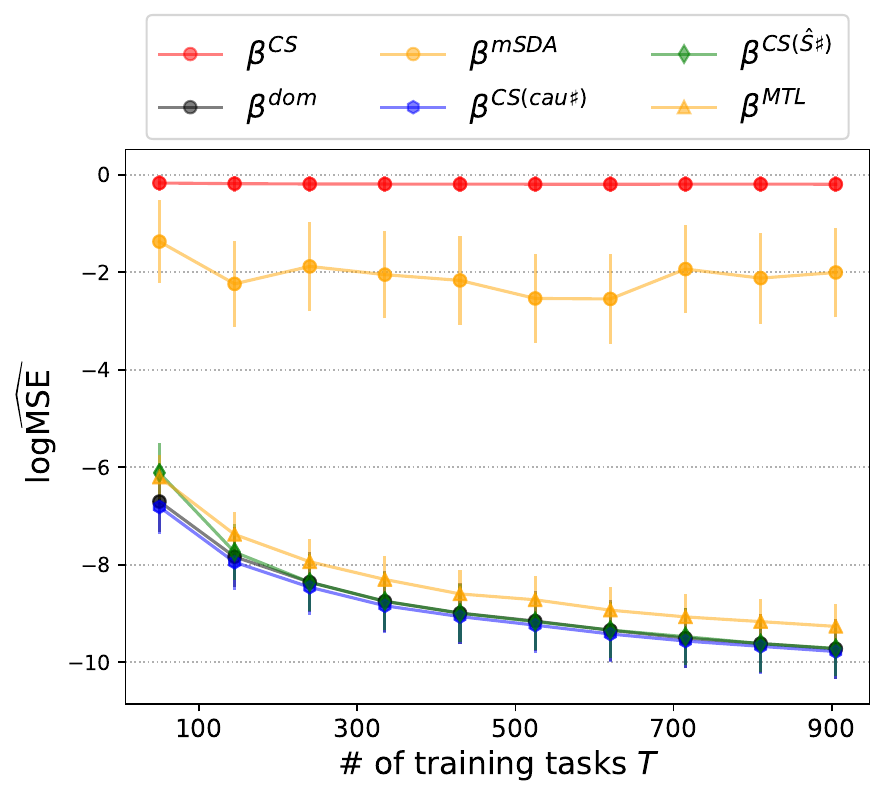}
\end{minipage}%
\hspace{0.8cm}
\begin{minipage}[b]{0.45\linewidth}
\centering
\hspace{-1.5cm}
\includegraphics[width=\textwidth]{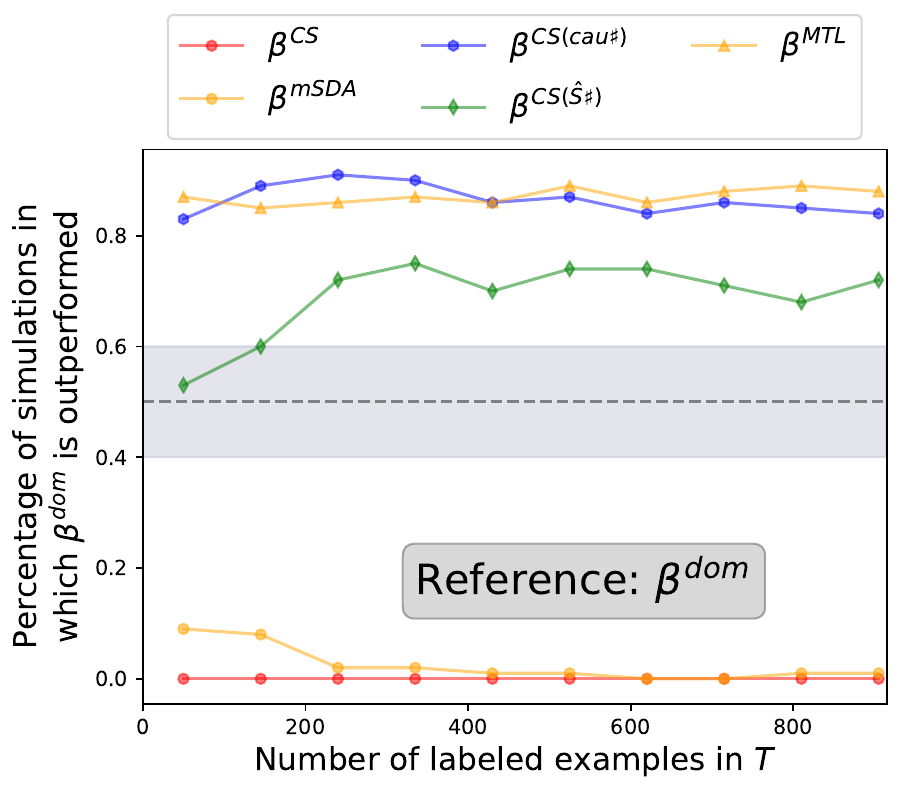}
\end{minipage}%
\caption{MTL setting. Percentage of repetitions (out of $100$) for which the corresponding method outperforms $\beta^{dom}$ (or $\beta^{CS(cau\sharp)}$ for the top right plot). Both $S$ and $N$ are of size~$3$, such that $\B{X}$ is $6$-dimensional. 
Upper left: AMTL setting. This plot shows that the methods $\beta^{CS(\hat S +)}$ and $\beta^{CS(cau +)}$ presented in Section~\ref{sec:MTL} perform well, but a large amount of data is necessary: $50000$ unlabeled examples from $T$ and $36000$ training examples are available. The naive method $\beta^{naive}$ performs poorly. 
Upper right: in the SMTL setting, we fix the number of training data ($500$ per task) and vary the amount of unlabeled data available from the test task. We report the percentage of scenarios in which the corresponding method outperforms $\beta^{CS(cau\sharp)}$ this time (which uses no unlabeled data). 
While $\beta^{mDA}$ always performs worse than $\beta^{CS(cau\sharp)}$ and does not exploit the unlabeled data, we see that $\beta^{CS(cau\sharp, UL)}$ performs better as the amount of unlabeled data increases. 
Bottom: SMTL setting, and we vary the number of labeled examples available in each training task. Here, significantly less labeled data was available in the training tasks (from $50$ to $1000$ per task). In this setting, the methods using unlabeled data were given $100$ unlabeled examples. Bottom left: logarithm of the empirical squared error in the test task for different estimators. Bottom right: percentage of repetitions (out of $100$) for which the corresponding method outperforms $\beta^{dom}$.}
  \label{fig:synt_numEx_mtl}
\end{figure*}
\begin{figure*}[t!]
\begin{minipage}[b]{0.6\linewidth}
\centering
\includegraphics[width=\textwidth]{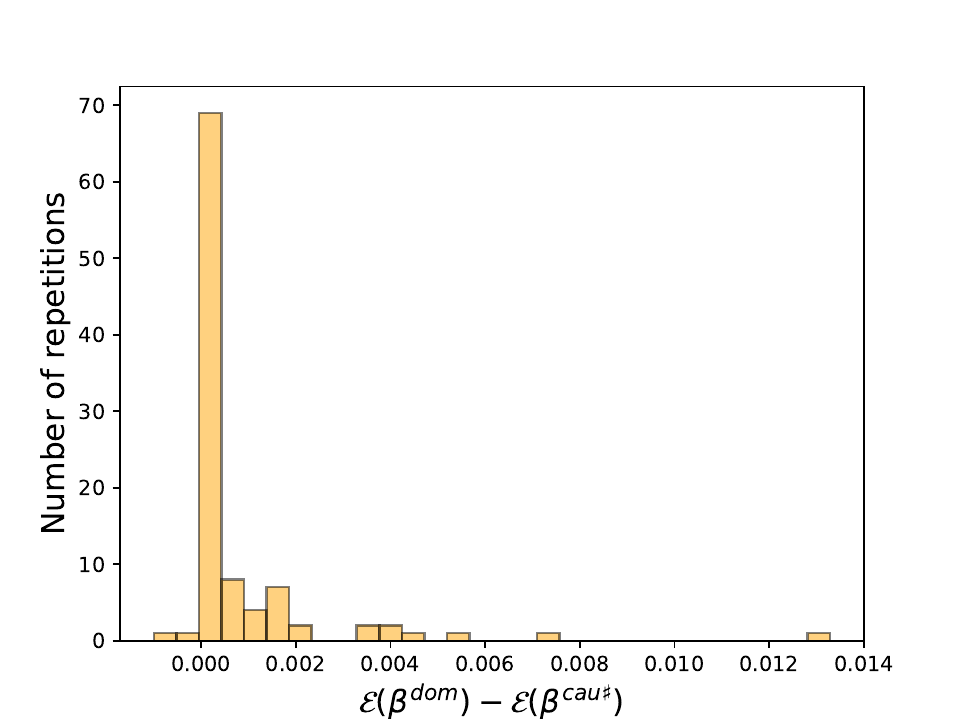}
\end{minipage}
\caption{In the SMTL setting, $900$ examples from each of the training tasks are available (this corresponds to the data point furthest to the 
right in the bottom plot of Figure~\ref{fig:synt_numEx_mtl}). 
We run $100$ repetitions and plot the histograms of $\Delta=\mathcal{E}(\beta^{dom})-\mathcal{E}(\beta^{CS(cau\sharp}))$. 
The proposed estimator outperform $\beta^{dom}$: for a large proportion of the repetitions, $\Delta>0$. More importantly, 
the distribution of $\Delta$ is heavily skewed in the positive values. In other words, when $\beta^{dom}$ outperforms 
$\beta^{CS(cau\sharp)}$, the difference in performance is small, while the difference is often larger for the converse. 
}
\label{fig:amtl}
\end{figure*}

\begin{figure*}[t!]
  \begin{minipage}[c]{0.48\linewidth}
  \centering
  \includegraphics[width=\textwidth]{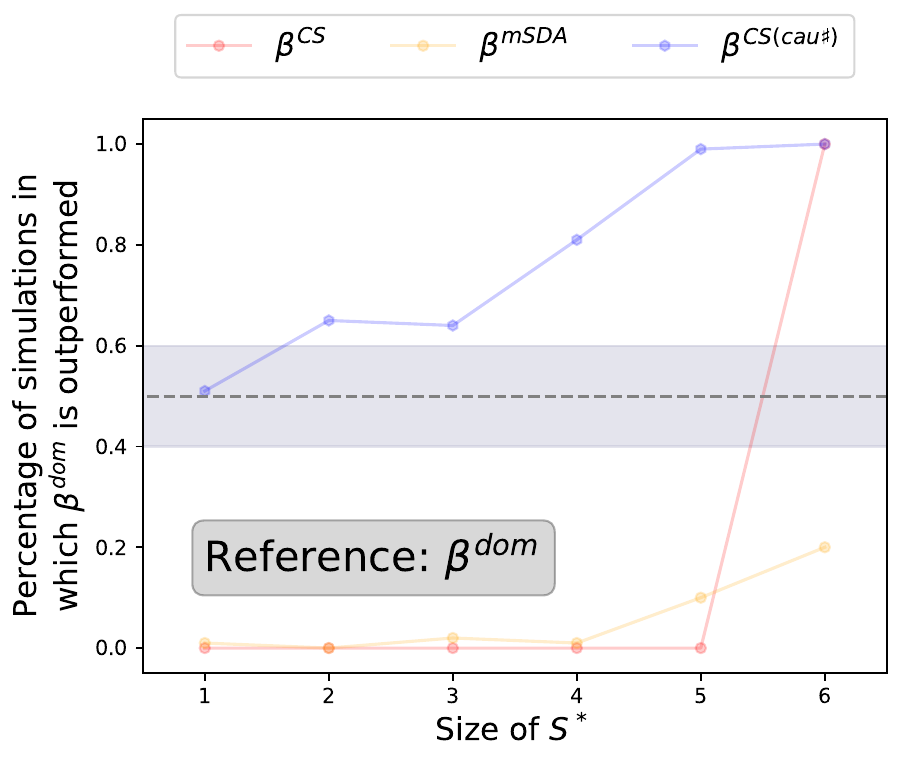}
  \end{minipage}
  \begin{minipage}[c]{0.48\linewidth}
  \centering
  \includegraphics[width=\textwidth]{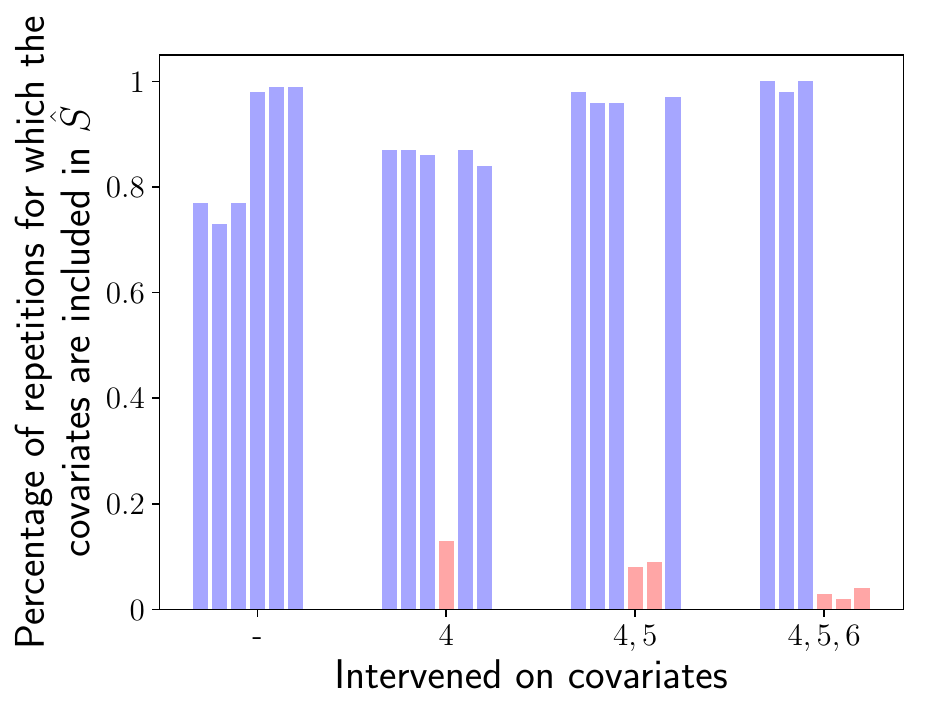}
  \end{minipage}
  \caption{
  \emph{Left}: SMTL setting with $6$ tasks and $900$ examples per task. We plot the percentage of repetitions (over $100$) for which 
  the given methods outperform $\beta^{dom}$, as a function of the size of the invariant set $S^*$. We see that as $S^*$ becomes larger, 
  more information is transferred from the training tasks, and as such the performance of $\beta^{CS(cau\sharp)}$ improves. When $S^*$ is 
  the full set, our method behaves like pooling the data. 
  \emph{Right}: Covariates selected by Algorithm~\ref{alg} when the training tasks contain interventions only on some of the covariates. 
  The bars represent the percentage of repetitions (out of $100$) for which the corresponding covariates were 
  selected. 
  When there are no interventions in the training tasks, meaning that all the training tasks follow the same distribution, 
  Algorithm~\ref{alg} systematically selects \emph{all} covariates for prediction. When more interventions are performed, however, 
  the corresponding covariates (in red) are excluded in a large number of the repetitions. 
  }
\label{fig:estimate_intervene}
\end{figure*}

\begin{figure*}[t!]
  \centering
  \includegraphics[width=.55\textwidth]{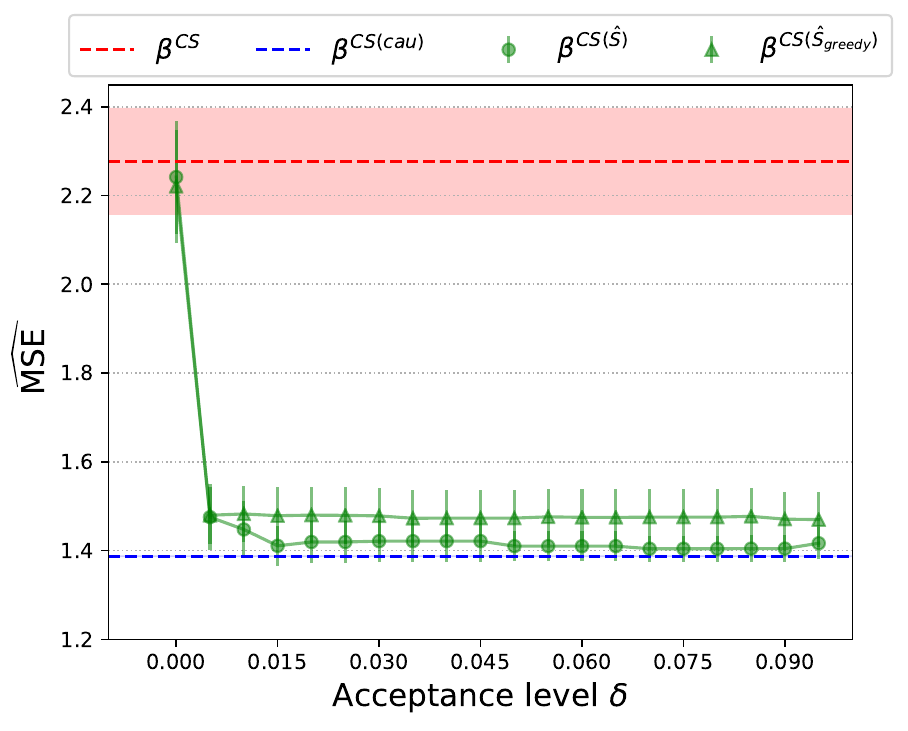}
  \caption{Logarithm of the empirical squared error in the test task in the DG setting as a function of the acceptance level of the statistical test $\delta$ in Algorithm~\ref{alg}. The setup corresponds to $t=3$ in Figure~\ref{fig:synt_numEx} (left), also over $100$ repetitions. For $\delta = 0$, all subsets are accepted, so the full set of predictors, which minimizes the validation squared error, is selected. Algorithm~\ref{alg} then returns $\beta^{CS}$. As $\delta$ increases, no subset is accepted, and Algorithm~\ref{alg} returns the subset with the largest p-value.}
\label{fig:pvals}
\end{figure*}
Our goal is to linearly predict target $Y^T$ using predictors $\B{X}^T = (\B{X}_{S^*}^T,\B{X}_N^T)$ on the test task. Given regression coefficient $\beta$, we measure the performance in the test task using the logarithm of the empirical estimator of $\C{E}_{\BB{P}^T}(\beta)$.

In Figure~\ref{fig:synt_numEx}, we are in the DG setting (thus, no labeled examples from $T$ are observed at training). $4000$ examples per training task are available for the top left and right plots, while 
only $1000$ examples per task are available on the bottom because of computational reasons. 
We report the log average empirical MSE over left out test tasks. We study both sparse and non sparse settings (in which full search is not feasible). 
On the upper left and upper right, we see that when more than four training tasks are available, 
both the full search and greedy approaches are able to recover an invariant set, and 
outperform pooling the data for any number of training tasks. When more than five training tasks are observed, $\beta^{CS(\hat{S})}$ performs like $\beta^{CS(cau)}$, which uses knowledge of the ground truth. On the bottom, full search is not feasible, and $\beta^{CS(\hat{S}greedy)}$ 
outperforms other approaches.  

In Figure~\ref{fig:synt_numEx_mtl} (top left), we consider an AMTL setting, in which large amounts of labeled data ($36000$) from the training tasks and unlabeled data from the test task ($50000$) are available. Both $S$ and $N$ are of size~$3$, such that $\B{X}$ is $6$-dimensional. For all MTL experiments, $6$ training tasks are available. 
We report the percentage of simulations for which the population MSE of a given approach outperforms $\beta^{dom}$. We see that $\beta^{CS(cau+, i.d.)}$ systematically outperforms $\beta^{dom}$. Moreover, $\beta^{CS(cau+)}$ and $\beta^{CS(\hat{S}+)}$
also perform well, and positive transfer is effective. However, a prohibitively large amount of labeled and unlabeled data is needed for these approaches, and the differences become non-significant 
for all methods except $\beta^{CS(cau+,i.d.)}$. 
  This shows the limitation of this family of approaches.  
In a setting with only $900$ examples per training task in SMTL, we plot in Figure~\ref{fig:amtl} the histogram of the error difference $\Delta=\mathcal{E}(\beta^{dom})-\mathcal{E}(\beta)$ for $\beta^{CS(cau\sharp)}$. Figure~\ref{fig:synt_numEx_mtl} (top right) corresponds to the same setting, but we vary the number of unlabeled data available (we only plot methods that use unlabeled data, and $\beta^{CS(cau\sharp)}$ is used as reference instead of $\beta^{dom}$). In Figure~\ref{fig:synt_numEx_mtl} (bottom) we consider an SMTL setting in which only $100$ unlabeled data points are available, and only few labeled examples are available in each task. Here, we see that $\beta^{CS(cau\sharp)}$, $\beta^{CS(\hat S \sharp)}$ and $\beta^{MTL}$ perform well, while other methods do not. In terms of MSE (bottom left), the difference in performance between the top competing methods is not statistically significant. 

\paragraph{Time complexity} The most expensive component of our method is the estimation of the invariant subset. 
In the DG experiment in Figure~\ref{fig:synt_numEx}, with $n=4000$ examples available for each of the $6$ tasks, and $p=6$ predictors, full subset search takes $0.067$ seconds and greedy search $0.037$, where the results are averaged over $100$ repetitions. With $p=10$, full search averages at $1.57$ seconds, and greedy search $0.0396$. With $p=30$, where full search is not feasible, greedy search averages at $1.21$ seconds. In the MTL experiment in Figure~\ref{fig:synt_numEx_mtl}, the EM algorithm runs for $0.00105$ seconds on average over $100$ repetitions. As a reference, in MTL, linear regression averages at $0.000301$ seconds and mSDA at $0.0547$ seconds.

\subsection{Sensitivity to the acceptance level $\delta$}\label{sec:delta}
Both Algorithm~\ref{alg} and its greedy version Algorithm~\ref{alg:greedy} receive an acceptance 
level $\delta$ as input for the statistical test. In our other experiments, we chose the 
standard value of $\delta = 0.05$. Figure~\ref{fig:pvals} shows 
the error on the test tasks in the DG setting for both methods for different values of $\delta$. 
The setting is the same as in the left of Figure~\ref{fig:synt_numEx} for three training tasks. 
$\beta^{CS}$ and $\beta^{CS(cau)}$ are provided as reference. For $\delta=0$, all subsets are accepted 
as invariant, thus both methods behave like pooling the data. After a critical value of $\delta$, no 
subset is accepted, and both algorithms return the subset with the largest p-value.

\subsection{Informativeness and subset estimation}\label{sec:inform}
The estimation of an invariant subset involves finding a subset for which the residuals have the same distribution across tasks. 
It is desirable, however, that the selected subset is one which explains the data best. This is ensured by selecting the subset which 
leads to the smallest error on a validation set. Therefore, some covariates in $N$ may be included in a selected subset \emph{if there are no 
interventions on this covariates in the training tasks}. More precisely, if including a covariate does not lead to a statistically measurable 
difference in the distribution of the residuals between the training tasks, it is advantageous in general to include it in the selected subset 
since the data is better explained.

We illustrate this in Figure~\ref{fig:estimate_intervene} (right) in the setting previously described with $p=6$. We estimate an invariant subset using Algorithm~\ref{alg} over $100$ 
repetitions in the following scenarios: i) all the covariates have the same distribution across tasks, ii) one, two or three covariates in $N$ are subject 
to interventions between the tasks. Figure~\ref{fig:estimate_intervene} (right) show the proportion of repetitions for which each covariate is included in the selected subset. We see that, 
as expected, covariates in $N$ for which there are no interventions are included in the selected subset in a large portion of the repetitions, while the 
other covariates are excluded. This highlights that Algorithm~\ref{alg} can only exclude covariates whose distribution shifts between training tasks. If 
being conservative is important for the problem at hand, one can modify Algorithm~\ref{alg} accordingly, see the end of Section~\ref{sec:cvMTL}.

Moreover, in Figure~\ref{fig:estimate_intervene} (left) we consider a similar setting, and we compute the performance against $\beta^{dom}$ in 
an SMTL setting as the size of the invariant set increases. We see that as the size of the invariant set increases, the performance of 
$\beta^{CS(cau\sharp)}$ improves, since more information is being transferred from the training tasks. When $p=6$, traditional covariate shift 
holds, and $\beta^{CS(cau\sharp)}$ performs on par with $\beta^{pool}$.

\subsection{Gene perturbation experiment}\label{sec:genes}

We apply our method to gene perturbation data provided by~\citet{kemmeren2014large}. This data set consists of the m-RNA expression levels of $p= 6170$ genes $X_1,\ldots,X_p$ of the Saccharomyces cerevisiae (yeast). It contains both $n_{obs} = 160$ observational data points and $n_{int} = 1479$ data points from intervention experiments. In each of these interventions, {one known gene (out of $p$ genes) is deleted}. In the following, we consider two different tasks. The observational sample is drawn from the first task, and the pooled $n_{int}$ interventions are drawn from the second task. 

\paragraph{Motivation}
\begin{figure*}[ht]
\begin{minipage}[b]{0.45\linewidth}
\centering
\includegraphics[width=6cm,height=5cm,keepaspectratio]{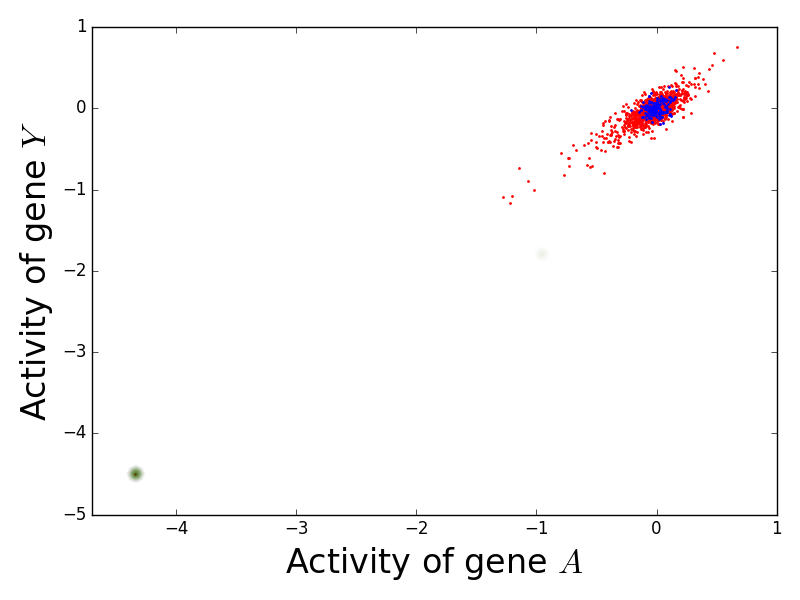}
\end{minipage}
\hspace{0.5cm}
\begin{minipage}[b]{0.45\linewidth}
\centering
\includegraphics[width=6cm,height=5cm,keepaspectratio]{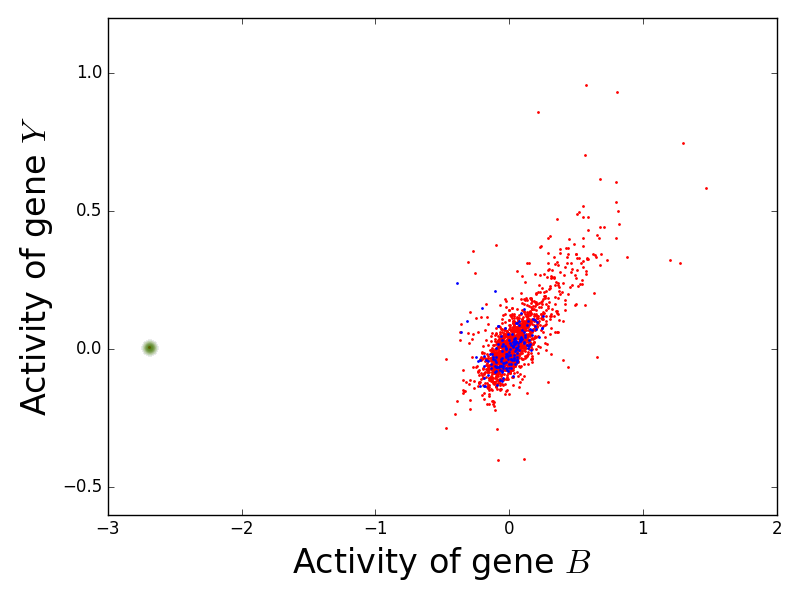}
\end{minipage}
\hspace{0.5cm}
\caption{
Example of the expression of pairs of genes, where $A$ is causal (left) and $B$ is non-causal (right) of target $Y$. The blue points are from the observational sample (task $1$), the red dots are the interventional sample (task $2$), and the green point corresponds to the single interventions in which $A$ and $B$ are intervened on respectively. On the left, a model learned on the data in red and blue would still perform well on the intervention point, which is not the case on the right.}
\label{fig:genes_motiv}
\end{figure*}
In order to gain an intuition about the experiments we are presenting, consider Figure~\ref{fig:genes_motiv}. We select as a target a gene $Y$ out of the $p$ genes, and our goal is to predict the activity of $Y$ given the remaining $p-1$ genes as features. Some of these $p-1$ genes are causal of the activation of $Y$. For example, Figure~\ref{fig:genes_motiv} shows on the x-axis the activity of two genes (gene $A$ on the left, gene $B$ on the right) such that:
\begin{compactitem}
\item The expressions of $A$ and $B$ are strongly correlated with the expression of $Y$.
\item $A$ is causal of $Y$ (here, we use the definition of a causal effect proposed by \citet{peters_causal_2015}).
\item $B$ is non-causal of $Y$ (anticausal or confounded).
\end{compactitem}
In Figure~\ref{fig:genes_motiv} (left), the blue points correspond to the $160$ data points from the {observational sample}, which corresponds to the first task. The red dots are the $1478$ data points from the {interventional sample}, except for the single data point for which $A$ is intervened on, and constitute the second task.  
The plot on Figure~\ref{fig:genes_motiv} (right) is constructed analogously for $B$. We can indeed see that in the pooled sample from task $1$ and $2$, $A$ and $B$ are both strongly correlated with target $Y$.

The key difference between both plots are the green points. On Figure~\ref{fig:genes_motiv} (left), the green dot corresponds to the single intervention experiment in which gene $A$ is intervened on. Similarly, the green dot on Figure~\ref{fig:genes_motiv} (right) is the single point in which $B$ is intervened on. Our goal is to consider the DG setting in which the test task consists on {this single intervention point}. 

For the causal gene $A$, one expects that a change in the activity of $A$ should translate into a proportional change in the activity of $Y$. We observe that, in the particular example of the left plot, a linear regression model from $A$ to $Y$ trained only on the pooled data from tasks $1$ and $2$ (blue and red in Figure~\ref{fig:genes_motiv}) 
would lead to a small prediction error on the intervened point (in green). That is, $S^* = \{A\}$ might be a good candidate for a set satisfying Assumptions~(A1), (A1') and (A2). 
For the non-causal gene $B$, however, intervening on $B$ leaves the activity of $Y$ unchanged, and the linear model learned on the data from tasks $1$ and $2$ performs badly on the test point in green. In such case, a candidate set is the empty set $S^* = \{\}$, leading to prediction using the mean of the target in the training data. A model which is aiming to test in these challenging intervention points should therefore include causal genes as features, but exclude non-causal genes. In these experiments, {we aim at testing whether we can exclude non-causal genes such as $B$ automatically}. 

\paragraph{Setup}
We address the problem of predicting the activity of a given gene from the remaining genes. We are looking at the following:
\begin{compactitem}
\item We consider $p$ different \textbf{problems}. In each problem $j\in\{1,\ldots,p\}$, we aim at predicting the activity $Y = X_j$ of gene $j$ using $(X_{\ell})_{\ell\neq j}$ as features.
 \item In each problem $j\in\{1,\ldots,p\}$, two \textbf{training tasks} $k\in\{1,2\}$ are available. The data from the first task is the observational sample, and the data from the second task are all the $n_{int}$ interventions (we shall subsequently remove some points for testing, see below).
\end{compactitem}
The goal is now to apply our method to each of the problems and estimate an invariant subset. 
Due to the large number of predictors, we first select the $10$ top predictor variables using the Lasso 
and then apply Algorithm~\ref{alg} to select a set of invariant predictors $\hat{S}$, see $\beta^{\hat{S} Lasso}$ in Table~\ref{tab:est}. We denote the indices of the features selected using Lasso by $L = (L_1,\ldots,L_{10})$.

The procedure is then evaluated as follows:
for each problem $j\in\{1,\ldots,p\}$, we first find the genes in $(X_{L_1},\ldots,X_{L_{10}})$ for which an interventional example is available. Note that this might not hold for all selected genes, since only $n_{int}<p$ interventions are available. We then iterate the following procedure (this is within the context of \emph{the same problem}): for each gene in $(X_{L_1},\ldots,X_{L_{10}})$ for which an intervention is available,
\begin{compactitem}
\item we put aside the example corresponding to this intervention 
from the training data (in the motivation example, this would correspond to the green point).
\item we estimate an invariant subset $\hat{S} \subseteq L$ using Algorithm~\ref{alg} with the remaining observational and interventional data.
\item we test all methods on the single intervention point which was put aside.
\end{compactitem}
We expect two different scenarios, as explained in the motivation paragraph above: (1) if the intervened gene is a \emph{cause} of the target gene, it should still be a good predictor
(see Section~\ref{sec:causality}); then, it should be beneficial to have this gene included in the set of predictors $\hat S$. (2) if the intervened gene is anticausal or confounded (we refer to this scenario as \emph{non-causal}), the statistical relation to the target gene might change dramatically after the intervention and therefore, one may not want to base the prediction on this gene.
In order to see this effect and understand how the different approaches for DG in Table~\ref{tab:est} handle the problem, we consider two groups of experiments.
\begin{compactitem}
\item[(1)] we select the target genes $Y$ for which one of the features in $L$ is causal for the activity of $Y$ and for which an intervention experiment is available. 
$39$ problems fall in this causal scenario. 
\item[(2)] out of the remaining problems we chose target genes with (non-causal) predictors that have been intervened on and --- in order to increase the difficulty of the problem --- that are strongly correlated with the target gene. 
  We therefore select $269$ cases for which a Pearson correlation test (the null hypothesis corresponds to no correlation) outputs a p-value equal to zero. 
\end{compactitem}
\paragraph{Results}

\begin{figure*}[ht]
    \hspace{-0.7cm}
    \begin{minipage}[b]{0.45\linewidth}
  \centering
\includegraphics[width=\textwidth]{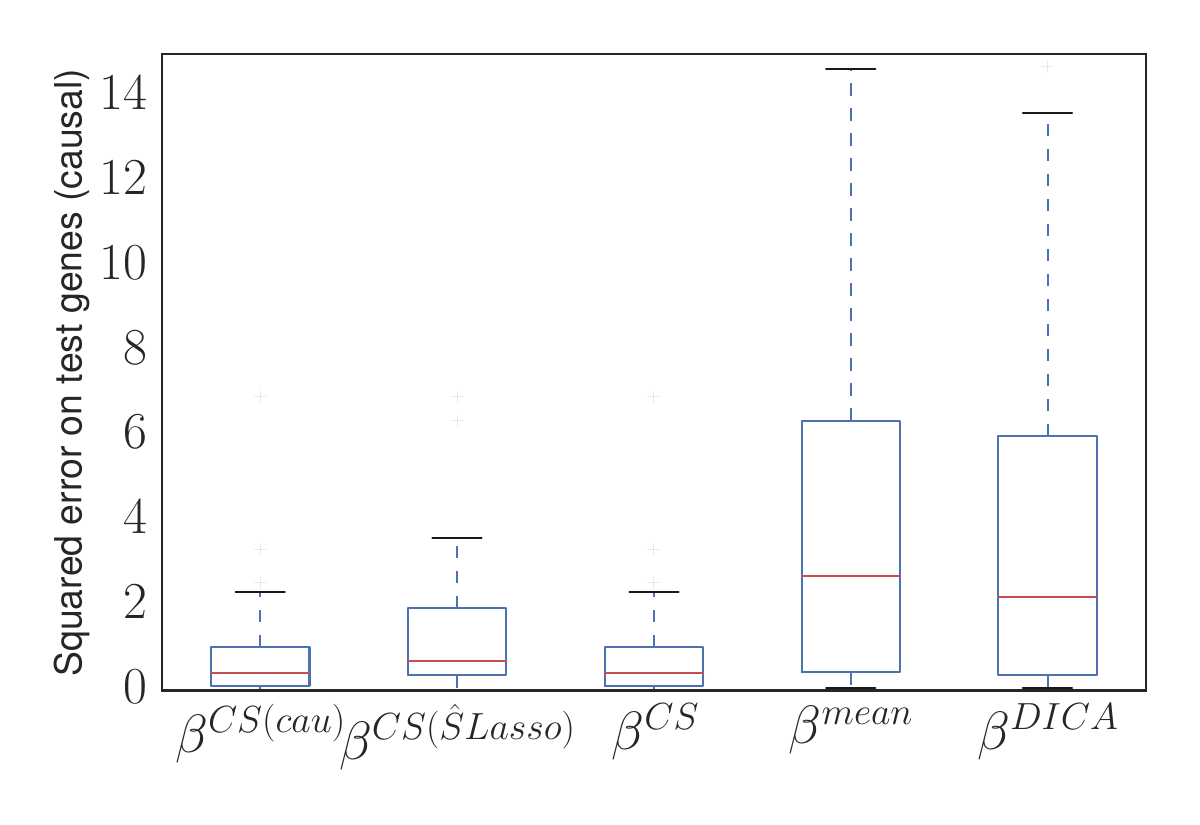}
\end{minipage}
\hspace{0.2cm}
\begin{minipage}[b]{0.45\linewidth}
\centering
\includegraphics[width=\textwidth]{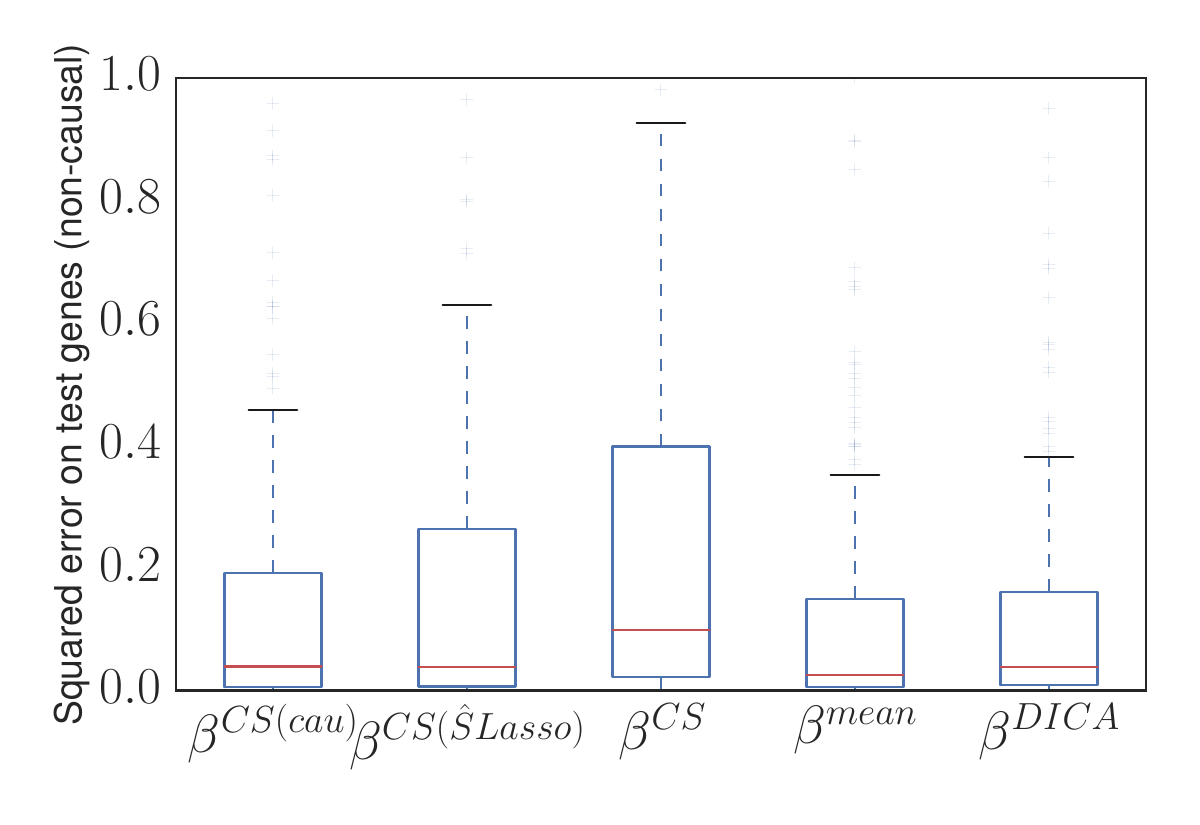}
\end{minipage}
\hspace{0.2cm}
\begin{minipage}[b]{0.45\linewidth}
\centering
\includegraphics[width=\textwidth]{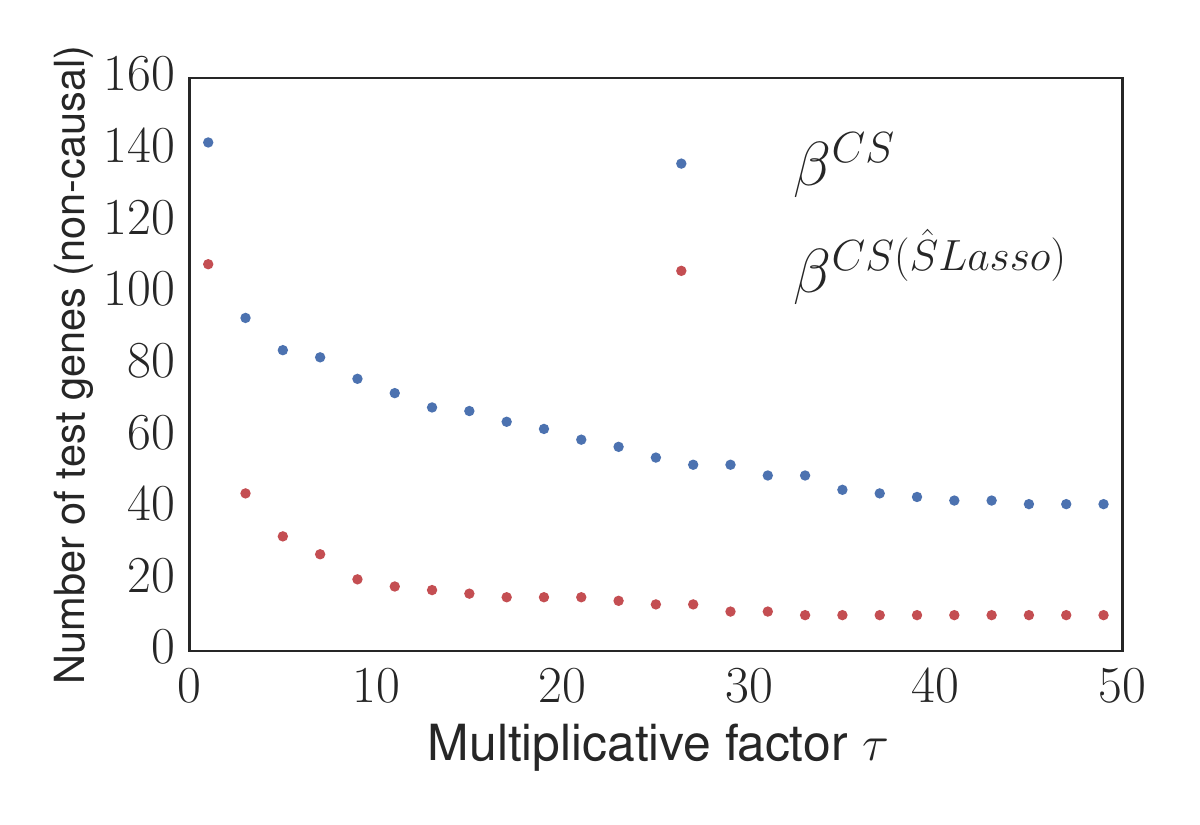}
\end{minipage}
\caption{In the causal problems (top left), interventions are performed on causal genes. As expected, the input genes continue to be good predictors, and $\beta^{CS}$ works well. In the non-causal problems (top right), one of the inputs is intervened upon and becomes a poor predictor, impairing the performance of $\beta^{CS}$. The mean predictor $\beta^{mean}$ uses none of the predictors, and therefore works comparatively well in this scenario. Our proposed estimator $\beta^{CS(\hat{S})}$ provides reasonable estimates in both the causal and non-causal settings, while other methods only perform well in one of the scenarios. $\beta^{DICA}$ performs similarly to $\beta^{mean}$ in both scenarios, and is therefore outperformed by other methods in the causal problems (note that $\beta^{DICA}$ uses all available features). Bottom: in the non-causal scenario (2), we plot the number of test genes for which the squared error for $\beta^{CS}$ is larger than $\tau$ times the squared error for $\beta^{CS(\hat{S})}$, and vice-versa, where $\tau$ is plotted on the x-axis. This plot shows the number of genes for which one of the method does significantly worse than the other. By this measure, $\beta^{CS(\hat{S}Lasso)}$ outperforms $\beta^{CS}$ for all values of $\tau$.}
\label{fig:bp_genes}
\end{figure*}
Figure~\ref{fig:bp_genes} shows box plots for the errors of the different methods for the causal problems (1) on the top left and for the non-causal problems (2) in the top right. We do not plot outliers in order to improve presentation. Figure~\ref{fig:bp_genes} (top left) presents the causal scenario. As expected, pooling does well in this setting. Figure~\ref{fig:bp_genes} (bottom) shows that in the non-causal problems (2), prediction using an invariant subset leads to less severe mistakes on test genes compared to pooling the tasks.

For comparison, since we know which predictors are being intervened on at test time, we included a method that makes use of causal knowledge: $\beta^{CS(cau)}$ uses all $10$ predictors in the causal problems (1) and all but the intervened gene in the non-causal problems (2). In practice, this causal knowledge is often not available. We regard it as promising that the fully automated procedure $\beta^{CS(\hat{S}Lasso)}$ performs comparably to $\beta^{CS(cau)}$.

\pagebreak
\section{Conclusions and further directions}{\label{sec:conclusion}}
We propose a method for transfer learning that is motivated by causal modeling and exploits a set of invariant predictors.  If the underlying causal structure is known and the tasks correspond to
interventions on variables other than the target variable, the causal parents of the target variable constitute such a
set of invariant predictors. We prove that predicting using an invariant set is optimal in an adversarial setting in DG. If the invariant structure is not known, we propose an algorithm that automatically detects an invariant subset, while also focusing on good prediction. In practice, we see that our algorithm successfully finds a set of predictors leading to invariant conditionals when enough training tasks are available. 
Our method can incorporate additional data from the test task (MTL) and yields good performance on synthetic data. 
Although an invariant set may not always exist,
our experiment on real data indicates that exploiting invariance leads to methods which are robust against transfer. 

As we saw in the DG and MTL experiments, $\beta^{\hat S}$ does not always performs as well as $\beta^{cau}$, which uses the ground truth. We believe that alternative methods for estimating the set $\hat S$ may close this gap. Furthermore, extending our framework to nonlinearities seems straight-forward and may prove to be useful in many applications. For instance, we provide a general, nonlinear version of Theorem~\ref{prop:adversarial} in Appendix~\ref{app:proofs}. Moreover, Algorithms~\ref{alg} and~\ref{alg:greedy} are presented in a linear setting. However, the extension to a nonlinear framework is straightforward. In particular, the linear regression can be replaced by a nonlinear regression method. 
We expect that there may be 
feature maps leading to invariant conditionals that are different from a subset. 

We expect our method to be favorable in (adversarial-like) situations with strong differences between the tasks, such as the gene experiment in Section~\ref{sec:genes}. We also evaluated our method on the School dataset~\citep{bakker2003task}, but found that we do not do better than pooling the data 
(we also do not do worse, the results are not shown). We believe this may be due to the fact that the difference between the tasks in this dataset are not too large. 

We believe, finally, that the link to causal assumptions and the exploitation of causal structure may lend itself well to proving additional theoretical results on transfer learning.



\newpage
\appendix
\section{}
\label{app:proofs}
In this Appendix, we provide proofs for the theoretical results in the paper, as well as an extension of Theorem~\ref{prop:adversarial}.

\subsection{A nonlinear extension of Theorem~\ref{prop:adversarial}} \label{proof:adversarial}
The extension of Theorem~\ref{prop:adversarial} to a nonlinear setting is straightforward. 
Given a subset $S^*$ leading to invariant predictions, the proposed predictor is defined as the conditional expectation
\begin{equation} \label{eq:estTL_general_app}
f_{S^*}  :\,
\begin{array}{ccc}
 \BB{R}^p & \rightarrow & \BB{R}\\
 \B{x} & \mapsto & = \BB{E}[Y^1 \given \B{X}_{S^*}^1 =\B{x}_{S^*}].
\end{array}
\end{equation}
The following theorem states that $f_{S^*}$ is optimal over the set of continuous functions $\C{C}^0$ in an adversarial setting. 
\begin{theorem} \label{prop:adversarialnonlin}
Consider $D$ tasks $(\B{X}^1, Y^1) \sim \BB{P}^1$,$\ldots$, $(\B{X}^D, Y^D) \sim \BB{P}^D$ that satisfy Assumption~(A1).
Then the estimator $f_{S^*}$ in~\eqref{eq:estTL_general_app} satisfies 
$$
f_{S^*} \in \argmin_{f\in\C{C}^0}  \sup_{\mathbb{P}^T \in \C{P}} \mean_{(\B{X}^T,Y^T) \sim \mathbb{P}^T} \left(Y^T - f(\B{X}^T)\right)^2\,,
$$
where $\C{P}$ 
contains all distributions over $(\B{X}^T, Y^T)$ that are absolutely continuous with respect to the same product measure $\mu$ and satisfy 
$Y^T \given \B{X}_{S^*}^T \overset{d}{=} Y^1 \given \B{X}_{S^*}^1$.
\end{theorem}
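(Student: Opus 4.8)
The plan is to compute the minimax value on both sides explicitly and show it is attained at $f_{S^*}$. Write $N := \{1,\dots,p\}\setminus S^*$ and set $v(\B{x}_{S^*}) := \BB{E}\big[(Y^1 - f_{S^*}(\B{x}))^2 \,\big|\, \B{X}_{S^*}^1 = \B{x}_{S^*}\big] = \mathrm{Var}(Y^1 \mid \B{X}_{S^*}^1 = \B{x}_{S^*})$, the conditional variance of the target given the invariant features in the first training task. First I would handle the estimator $f_{S^*}$ itself: for any $\mathbb{P}^T\in\C{P}$, since $f_{S^*}$ depends only on $\B{x}_{S^*}$, the tower property gives $\mean_{\mathbb{P}^T}(Y^T - f_{S^*}(\B{X}^T))^2 = \mean_{\B{X}_{S^*}^T}\big[\, \mean_{\mathbb{P}^T}[(Y^T-f_{S^*})^2 \mid \B{X}_{S^*}^T]\,\big]$, and the inner conditional expectation equals $v(\B{X}_{S^*}^T)$ by the defining constraint $Y^T \mid \B{X}_{S^*}^T \overset{d}{=} Y^1 \mid \B{X}_{S^*}^1$ of $\C{P}$. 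Hence $\sup_{\mathbb{P}^T\in\C{P}}\mean_{\mathbb{P}^T}(Y^T - f_{S^*}(\B{X}^T))^2 = M$, where $M := \sup_Q \mean_{\B{X}_{S^*}\sim Q}\, v(\B{X}_{S^*})$, the supremum being over all marginals $Q$ of $\B{X}_{S^*}^T$ admissible under $\C{P}$ (i.e.\ absolutely continuous w.r.t.\ the corresponding factor of $\mu$); the ``$\ge$'' direction uses that any such $Q$, combined with the prescribed conditional and any admissible law for $\B{X}_N^T$, yields an element of $\C{P}$.

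Next I would prove the matching lower bound for an arbitrary competitor $f\in\C{C}^0$ (in fact for any measurable $f$). Fix an admissible marginal $Q$ for $\B{X}_{S^*}^T$ and construct $\mathbb{P}^T\in\C{P}$ in which $\B{X}_N^T$ is drawn \emph{independently} of $(\B{X}_{S^*}^T, Y^T)$ from a fixed admissible law $R$ (e.g.\ a standard Gaussian), while $(\B{X}_{S^*}^T, Y^T)$ has $\B{X}_{S^*}^T$-marginal $Q$ and the conditional prescribed by $\C{P}$. Conditioning on $(\B{X}_{S^*}^T, \B{X}_N^T) = (\B{x},\B{z})$ and using the bias--variance decomposition of $(Y^T - f(\B{x},\B{z}))^2$ around the conditional mean $f_{S^*}(\B{x})$, the inner risk equals $v(\B{x}) + (f(\B{x},\B{z}) - f_{S^*}(\B{x}))^2$, so $\mean_{\mathbb{P}^T}(Y^T - f(\B{X}^T))^2 = \mean_{Q}\, v + \mean_{Q\otimes R}(f - f_{S^*})^2 \ge \mean_{Q}\, v$. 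Taking the supremum over admissible $Q$ gives $\sup_{\mathbb{P}^T\in\C{P}}\mean_{\mathbb{P}^T}(Y^T - f(\B{X}^T))^2 \ge M$ for every $f$, and since $f_{S^*}$ attains $M$ this establishes $f_{S^*}\in\argmin_{f\in\C{C}^0}\sup_{\mathbb{P}^T\in\C{P}}\mean_{(\B{X}^T,Y^T)\sim\mathbb{P}^T}(Y^T-f(\B{X}^T))^2$. (The linear Theorem~\ref{prop:adversarial} is recovered as the special case where (A2) forces $v\equiv\mathrm{Var}(\epsilon)$ constant, whence $M=\mathrm{Var}(\epsilon)$ and the optimum within the linear class is $\beta^{CS(S^*)}$.)

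The main obstacle is measure-theoretic bookkeeping rather than any deep idea: one must verify that the distributions built in the lower-bound step genuinely lie in $\C{P}$ (absolute continuity w.r.t.\ the product measure $\mu$ — this is exactly where the product structure of $\mu$ is used), that the regular conditional distribution $Y^1\mid\B{X}_{S^*}^1$ and hence $v$ are well defined $\mu$-a.e.\ under the standing density assumptions, and that the iterated expectations are licit (finite second moments). Two degenerate cases deserve a remark: if $M=\infty$ the claim is vacuous since every $f$ then has infinite worst-case risk; and the statement should be read either with $f_{S^*}$ assumed to belong to $\C{C}^0$ (a mild regularity condition on the conditional mean) or with the $\argmin$ taken over all measurable functions — note that the lower-bound argument nowhere uses continuity of the competitor $f$.
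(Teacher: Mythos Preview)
Your proof is correct and uses essentially the same key idea as the paper: construct a test distribution in which $\B{X}_N^T$ is independent of $(\B{X}_{S^*}^T, Y^T)$, so that any competitor $f$ cannot benefit from the non-invariant features and its risk decomposes as $\mean_Q v + \mean(f-f_{S^*})^2 \ge \mean_Q v$. The only organizational difference is that the paper argues directly ``for every $\mathbb{Q}\in\C{P}$ there is $\mathbb{P}\in\C{P}$ with $\C{E}_{\mathbb{P}}(f)\ge\C{E}_{\mathbb{Q}}(f_{S^*})$'' (taking $\mathbb{P}$ to have density $q(\B{x}_{S^*},y)\,q(\B{x}_N)$), whereas you compute the minimax value $M=\sup_Q\mean_Q v$ explicitly and verify both bounds; your version is slightly more informative and your measure-theoretic caveats are well placed.
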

\begin{proof}

Consider a function $f$ that is possibly different from $f_{S^*}$, see~\eqref{eq:estTL_general_app}.
For each distribution $\mathbb{Q} \in \mathcal{P}$, we will now construct a distribution $\mathbb{P} \in \mathcal{P}$ such that 
$$
\int (y - f(\B{x}))^2 \, d\mathbb{P}
\geq
\int (y - f_{S^*}(\B{x}))^2 \, d\mathbb{Q}\,.
$$
In this proof, we assume that the probability distributions in $\mathcal{P}$ are absolutely continuous with respect to Lebesgue measure. The extension to the case where they are absolutely continuous with respect to a same product measure $\mu$ is straightforward. 
Let us therefore assume that $\mathbb{Q}$ has a density
$(\B{x}, y) \mapsto q(\B{x}, y)$. Define $\mathbb{P}$ to be the distribution that corresponds to 
$p(\B{x}, y) := q(\B{x}_{S^*}, y) \cdot q(\B{x}_{N})$,
where $\B{x}_{N}$ contains all components of $\B{x}$ that are not in ${S^*}$. In the distribution $\mathbb{P}$, the random vector $\B{X}_{N}$ is independent of $(\B{X}_{S^*}, Y)$. 
But then
\begin{align*}
&\int (y - f(\B{x}))^2 \, d\mathbb{P} \\
& = \int_{\B{x}_{N}} \int_{\B{x}_{S^*}, y} (y - f(\B{x}_{S^*}, \B{x}))^2 \, p(\B{x}_{S^*}, y) \, d\B{x}_{S^*}\, dy\, p(\B{x}_{N}) \,d\B{x}_{N}\\
&\geq \int_{\B{x}_{N}} \int_{\B{x}_{S^*}, y} (y - f_{S^*}(\B{x}_{S^*}))^2 \,p(\B{x}_{{S^*}},y)\, d\B{x}_{S^*}\, dy\, p(\B{x}_{N})\, d\B{x}_{N}\\
&= \int_{\B{x}, y} (y - f_{S^*}(\B{x}_{S^*}))^2 \,q(\B{x}_{{S^*}},\B{x}_{N},y)\, d\B{x}_{S^*}\, dy\,  d\B{x}_{N}\\
&= \int (y - f_{S^*}(\B{x}))^2 \, d\mathbb{Q}.
\end{align*}
$ $ \hfill $ $ \vspace{-1.6cm}\\
$ $ \hfill $ $

\end{proof}

\subsection{Proof of Proposition~\ref{prop:threeNodes}}\label{proof:threeNodes}
We consider three variables and the following generative process: $Y^k = \alpha^t \B{X}_{S^*}^k + \epsilon^k$, $Z^k = \gamma^kY^k + \eta^k$, where $\epsilon^k\sim\C{N}(0,\sigma^2)$, $\eta^k\sim\C{N}(0,\sigma_{\eta}^2)$ and $(\B{X}_{S^*}^k)_j \sim \C{N}(0,(\sigma_X)_j^2)$. In this model, $\gamma^k$ is the parameter responsible for the difference between the tasks, while the other parameters are shared between the tasks. 

At training time, $D$ tasks are available. We first aim to obtain an explicit formula for the linear regression coefficients $\beta^{CS} = (\beta_{S^*}^{CS}, \beta_Z^{CS})$ obtained from pooling all the training tasks together. Denote by $\B{X}$, $Y$ and $Z$ the pooled training data. For fixed $\gamma^1,\ldots,\gamma^D$, the expected loss in the training data satisfies for coefficient $\beta$ verifies:
\begin{align}\label{eq:expPool}
\BB{E} &\left(\left(Y-(\beta_X)^t \B{X} -\beta_Z Z\right)^2\right) = \frac{1}{D}\sum_{k=1}^D\BB{E}\left(Y^k-(\beta_X)^t \B{X}^k-\beta_Z Z^k\right)^2 \nonumber\\
& \qquad = \beta_X^t\mbox{diag}(\sigma_X^2)\beta_X + \frac{\beta_Z^2}{D}\left(\sigma_{\eta}^2D+V_Y\overline{\gamma^2}\right) + 2(\beta_Z\frac{\bar{\gamma}}{D}-1) \alpha^t\mbox{diag}(\sigma_X^2)\beta_X  + V_Y -2\frac{\bar{\gamma}}{D}V_Y \beta_Z,
\end{align}
where $V_Y = \alpha^t \mbox{diag}(\sigma_X^2)\alpha + \epsilon^2$.
By differentiating~\eqref{eq:expPool} with respect to $\beta$, we obtain the following expression for the pooled coefficients:
\begin{align*}
\beta_Z^{CS} = \frac{\bar{\gamma}\sigma^2}{V_Y^2\overline{\gamma^2}+D\sigma_{\eta}^2-\frac{\bar{\gamma}^2}{D}\alpha^t\mbox{diag}(\sigma_X^2)\alpha}
\quad \text{ and }
\quad \beta_{S^*}^{CS} = (1-\frac{\bar{\gamma}}{D}\beta_Z^{CS})\alpha,
\end{align*}
where $\overline{\gamma^2} = \sum_{k=1}^D (\gamma^k)^2$ and $\overline{\gamma} = \sum_{k=1}^D \gamma^k$.
Consider now an unseen test task with coefficient $\gamma^T$. The expected loss on the test task using the pooled coefficients is:
\begin{align}\label{eq:errorPoolP}
\C{E}_{\BB{P}^T}(\beta^{CS}) =\BB{E}\left((Y^T-(\beta_X^{CS})^t\B{X}^T-\beta_Z^{CS} Z^T)^2\right) &= \left(\beta_X^{CS}\right)^t\mbox{diag}(\sigma_X^2)\beta_X^{CS}+(\beta_Z^{CS})^2\left(V_Y(\gamma^T)^2 + \sigma_{\eta}^2\right)  \nonumber\\
&\qquad+ 2\beta_Z^{CS}\gamma^T\alpha^t\mbox{diag}(\sigma_X^2)\beta_X^{CS } + V_Y \nonumber\\
&\qquad - 2\alpha^t \mbox{diag}(\sigma_X^2)\beta_X^{CS}-2\beta_Z^{CS}V_Y\gamma^T.
\end{align}
 Therefore, the expectation with respect to $\gamma^T$ is:
\begin{align}
\BB{E}_{\gamma^T}\left(\C{E}_{\BB{P}^T}(\beta^{CS})\right) &= (\beta_X^{CS})^t\mbox{diag}(\sigma_X^2)\beta_X^{CS} + (\beta_Z^{CS})^2\left(V_Y\Sigma^2+\sigma_{\eta}^2\right) + V_Y - 2\alpha^t \mbox{diag}(\sigma_X^2)\beta_X^{CS} \nonumber
\end{align}
Denote by $\C{E}_{\BB{P}^T}(\beta^S) = \sigma^2$ the expected loss when using the invariant conditional predictor $\beta^{S^*} = (\alpha,0)$. Then:
\begin{align}
&\BB{E}_{\gamma^T}\left(\C{E}_{\BB{P}^T}(\beta^{CS})\right) \geq \BB{E}_{\gamma^T}\left(\C{E}_{\BB{P}^T}(\beta^{S^*})\right)\nonumber \\
&\Leftrightarrow  (\beta_X^{CS})^t\mbox{diag}(\sigma_X^2)(\beta_X^{CS}) + (\beta_Z^{CS})^2\left(V_Y\Sigma^2+\sigma_{\eta}^2\right) + V_Y-2\alpha^t \mbox{diag}(\sigma_X^2)\beta_X^{CS} \geq \sigma^2 \nonumber \\
&\Leftrightarrow (\beta_Z^{CS})^2\left(V_Y\Sigma^2+\sigma_{\eta}^2\right) \geq 2\alpha^t \mbox{diag}(\sigma_X^2)\beta_X^{CS}-(\beta_X^{CS})^t\mbox{diag}(\sigma_X^2)\beta_X^{CS}-\alpha^t\mbox{diag}(\sigma_X^2)\alpha \nonumber\\
&\Leftrightarrow (\beta_Z^{CS})^2\left(V_Y\Sigma^2+\sigma_{\eta}^2\right) \geq -\frac{\bar{\gamma}^2}{D^2}(\beta_Z^{CS})^2\alpha^t\mbox{diag}(\sigma_X^2)\alpha,
\end{align}
by replacing $\beta_X^{CS} = \alpha-\alpha\frac{\overline{\gamma}}{D}\beta_Z^{CS}$. This inequality holds true for any value of the variance $\Sigma^2$, and the pooled coefficient leads to larger error in expectation. 

Consider now that the coefficients $\gamma^k$ are fixed and centered around a non-zero value $\mu$. Then the expectation with respect to $\gamma^T$ of the loss in the test task is the following:
\begin{align}
\BB{E}_{\gamma^T}\left(\C{E}_{\BB{P}^T}(\beta^{CS})\right) &= (\beta_X^{CS})^t\mbox{diag}(\sigma_X^2)\beta_X^{CS}+(\beta_Z^{CS})^2\left(V_Y(\Sigma^2+\mu^2) + \sigma_{\eta}^2\right)  \nonumber\\
&\qquad+ 2\beta_Z^{CS}\alpha^t\mbox{diag}(\sigma_X^2)\beta_X^{CS }\mu + V_Y 
- 2\alpha^t \mbox{diag}(\sigma_X^2)\beta_X^{CS}-2\beta_Z^{CS}V_Y\mu.
\end{align}
Then, if $\bar{\gamma} \neq 0$ (if $\bar{\gamma} = 0$, both estimators coincide): 
\begin{align}
&\BB{E}_{\gamma^T}\left(\C{E}_{\BB{P}^T}(\beta^{CS})\right) \geq \BB{E}_{\gamma^T}\left(\C{E}_{\BB{P}^T}(\beta^{S^*})\right)\; \Leftrightarrow \; \Sigma^2 \geq P(\mu),
\end{align}
where $P(\mu) =- \mu^2 -\frac{2}{\beta_Z^{CS}}\left(\left(1-\frac{\bar{\gamma}}{D}\beta_Z^{CS}\right)\frac{\alpha^t\mbox{diag}(\sigma_X^2)\alpha}{V_Y}-1\right)\mu -\frac{\bar{\gamma}^2}{V_Y D^2}\alpha^t\mbox{diag}(\sigma_X^2)\alpha+\frac{\sigma_{\eta}}{V_Y}$. 

\subsection{Proof of Proposition~\ref{prop:mtlok}}
\label{proof:mtlok}
\begin{proof}
  For $k\in\{1,\ldots,D,T\}$, let $\BB{Q}^k$ be the probability distribution with density:
  \begin{equation} \label{eq:surrog}
    q^k(\B{x}_{S^*}, \B{x}_N, y) := p^k(\B{x}_{S^*}, y) p^T(\B{x}_N \given \B{x}_{S^*}, y).
  \end{equation}
  In the test task $T$, we trivially have $q^T= p^T$. First, it is easy to see that $q^k$ and $p^k$ have the same marginal distribution over $\B{x}_{S^*}$ and $y$. Indeed:
  \begin{align}\label{eq:marginal}
    \displaystyle q^k(\B{x}_{S^*},y) &= \int_{\BB{R}^{|N|}}q^k(\B{x}_{S^*}, \B{x}_N, y)d\B{x}_N \nonumber\\
    & = \int_{\BB{R}^{|N|}} p^k(\B{x}_{S^*}, y) p^T(\B{x}_N \given \B{x}_{S^*}, y)d\B{x}_N \nonumber \\
    & = p^k(\B{x}_{S^*}, y) \int_{\BB{R}^{|N|}}p^T(\B{x}_N \given \B{x}_{S^*}, y)d\B{x}_N =  p^k(\B{x}_{S^*}, y).
  \end{align}
Second, we prove that the conditional $q^k(y \given \B{x}_{S^*}, \B{x}_N)$ is the same in all tasks. Indeed, by applying Bayes' rule:

  \begin{align*}
    q^k(y \given \B{x}_{S^*}, \B{x}_N)  &= q^k(\B{x}_N \given y, \B{x}_{S^*})\frac{q^k(y, \B{x}_{S^*})}{q^k(\B{x}_{S^*}, \B{x}_N)} \\
    &= p^T(\B{x}_N \given y, \B{x}_{S^*}) \frac{q^k(y \given \B{x}_{S^*})}{q^k(\B{x}_N \given \B{x}_{S^*})} \\
    &=  p^T(\B{x}_N \given y, \B{x}_{S^*}) \frac{p^k(y \given \B{x}_{S^*})}{\int_{\BB{R}}q^k(y,\B{x}_{N} \given \B{x}_{S^*})dy} \\
    &= p^T(\B{x}_N \given y, \B{x}_{S^*}) \frac{p^k(y \given \B{x}_{S^*})}{\int_{\BB{R}}q^k(\B{x}_N \given y,\B{x}_{S^*})q^k(y \given \B{x}_{S^*})dy} \\
    &= p^T(\B{x}_N \given y, \B{x}_{S^*}) \frac{p^k(y \given \B{x}_{S^*})}{\int_{\BB{R}}p^T(\B{x}_N \given y,\B{x}_{S^*})p^k(y \given \B{x}_{S^*})dy}.
  \end{align*}
We have used the fact that $q^k(\B{x}_N \given y,\B{x}_{S^*}) = p^T(\B{x}_N \given y,\B{x}_{S^*})$, which follows from~\eqref{eq:marginal}. Since the last equality leads to a term which is equal in all tasks (indeed, Assumption (A1) ensures that $p^k(y \given \B{x}_{S^*})$ is the same for all $k\in\{1,\ldots,D,T\}$), we have the desired result. 
\end{proof}

\subsection{Statement and proof of Proposition~\ref{prop:combinelabels}}\label{sec:prop_combl}
In this Section, we provide an analytic expression for $\beta^{opt}$ from~\eqref{eq:betaopt} in terms of $\alpha$ and $\epsilon$.

\begin{proposition} \label{prop:combinelabels}
Assume that $\B{X}_{S^*}$ follows an arbitrary distribution and that Assumptions~(A1) and~(A2) hold.
Let $\gamma\in\BB{R}^{|N|}$ be the solution of an $L^2$ regression from $\B{X}_N^T$ on $Y^T$. Therefore, we can write $\B{X}_N^{T} = \gamma Y^{T} + \eta$, with $\eta$ uncorrelated to $Y^T$, and the components of $\eta$ can be correlated. 
Then the regression coefficients 
$\beta^{opt} = (\beta_{S^*}^{opt},\beta_N^{opt})$ minimizing the expected squared loss in the test task satisfy
\begin{align}
\beta_N^{opt}  &= \mathbb{E}(\epsilon^2)M^{-1}\gamma \,, \label{eq:new_betaN} \\
\beta_{S^*}^{opt}  &=   \alpha\left(1 - (\gamma^T)^t \beta_N^{T}\right)-\Sigma_{X,S^*}^{-1}\Sigma_{X,N}\beta_N\,,
\label{eq:new}
\end{align}
where $M =  \mathbb{E}(\epsilon^2)\gamma\gamma^t +\Sigma_{N} -\Sigma_{X,N}^t\Sigma_{X,S^*}^{-1}\Sigma_{X,N}$, and
$\Sigma_N := \mathbb{E}(\eta\eta^t)$, $\Sigma_{X,S^*} := \mathbb{E}(\B{X}_{S^*} \B{X}_{S^*}^t)$, $\Sigma_{X,N} := \mathbb{E}(\B{X}_{S^*}\eta^t)$ are the corresponding Gram matrices.\footnote{We dropped the superscript $T$ to lighten the notation.}
\end{proposition}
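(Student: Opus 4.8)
The plan is to characterise $\beta^{opt}$ via the population normal equations for least squares in the test task and solve them block-wise along the partition of the predictors into the invariant coordinates $S^*$ and the remaining coordinates $N$. Dropping the superscript $T$ as in the statement, $\beta^{opt}$ is the unique solution of $\Sigma_{\B{X}}\,\beta^{opt}=\B{b}$ with $\Sigma_{\B{X}}:=\mathbb{E}[\B{X}\B{X}^t]$ and $\B{b}:=\mathbb{E}[\B{X}Y]$, and we have two structural relations at our disposal: from (A2), $Y=\alpha^t\B{X}_{S^*}+\epsilon$ with $\epsilon$ independent of $\B{X}_{S^*}$ and $\mathbb{E}[\epsilon]=0$; and, by the definition of $\gamma$ as an $L^2$ coefficient, $\B{X}_N=\gamma Y+\eta$ with $\mathbb{E}[\eta Y]=0$. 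Combining the two gives the convenient form $\B{X}_N=\gamma\alpha^t\B{X}_{S^*}+\gamma\epsilon+\eta$.

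First I would evaluate all the relevant second moments (the Gram matrices of the statement). This yields $\mathbb{E}[\B{X}_{S^*}\B{X}_{S^*}^t]=\Sigma_{X,S^*}$, $\mathbb{E}[\B{X}_{S^*}Y]=\Sigma_{X,S^*}\alpha$, $\mathbb{E}[\B{X}_{S^*}\B{X}_N^t]=\Sigma_{X,S^*}\alpha\gamma^t+\Sigma_{X,N}$, $\mathbb{E}[\B{X}_N\B{X}_N^t]=V_Y\,\gamma\gamma^t+\Sigma_N$ and $\mathbb{E}[\B{X}_NY]=V_Y\,\gamma$, where $V_Y:=\mathbb{E}[Y^2]=\alpha^t\Sigma_{X,S^*}\alpha+\mathbb{E}[\epsilon^2]$; here one uses repeatedly that $\epsilon\perp\B{X}_{S^*}$, that $\mathbb{E}[\eta Y]=0$, and the identity $\mathbb{E}[\eta\epsilon]=-\Sigma_{X,N}^t\alpha$ forced by $\mathbb{E}[\eta Y]=0$ together with $Y=\alpha^t\B{X}_{S^*}+\epsilon$ (in the causal picture of Section~\ref{sec:causality}, $\eta$ is simply independent of $\B{X}_{S^*}$ and this cross term plays no role). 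The $S^*$-block of $\Sigma_{\B{X}}\beta^{opt}=\B{b}$ then reads $\Sigma_{X,S^*}\beta_{S^*}^{opt}+(\Sigma_{X,S^*}\alpha\gamma^t+\Sigma_{X,N})\beta_N^{opt}=\Sigma_{X,S^*}\alpha$; left-multiplying by $\Sigma_{X,S^*}^{-1}$ and rearranging gives directly $\beta_{S^*}^{opt}=\alpha\big(1-\gamma^t\beta_N^{opt}\big)-\Sigma_{X,S^*}^{-1}\Sigma_{X,N}\beta_N^{opt}$, which is \eqref{eq:new}.

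Next I would substitute this expression for $\beta_{S^*}^{opt}$ into the $N$-block, $(\gamma\alpha^t\Sigma_{X,S^*}+\Sigma_{X,N}^t)\beta_{S^*}^{opt}+(V_Y\gamma\gamma^t+\Sigma_N)\beta_N^{opt}=V_Y\gamma$. Regrouping the terms proportional to $\gamma$ and using $V_Y-\alpha^t\Sigma_{X,S^*}\alpha=\mathbb{E}[\epsilon^2]$, I expect the equation to collapse to $\big(\mathbb{E}[\epsilon^2]\gamma\gamma^t+\Sigma_N-\Sigma_{X,N}^t\Sigma_{X,S^*}^{-1}\Sigma_{X,N}\big)\beta_N^{opt}=\mathbb{E}[\epsilon^2]\gamma$, i.e.\ $M\beta_N^{opt}=\mathbb{E}[\epsilon^2]\gamma$; inverting $M$ then yields \eqref{eq:new_betaN}. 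Invertibility of $M$ should follow by exhibiting it as (a Schur complement of) the covariance of the part of $\gamma\epsilon+\eta$ orthogonal to $\B{X}_{S^*}$, which is positive definite whenever $\Sigma_{\B{X}}$ is.

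The main obstacle is the linear algebra rather than any conceptual point: one has to evaluate correctly the cross-moments that the decomposition $\B{X}_N=\gamma Y+\eta$ introduces (notably $\mathbb{E}[\B{X}_N\B{X}_N^t]$, $\mathbb{E}[\B{X}_{S^*}\B{X}_N^t]$ and the induced $\mathbb{E}[\eta\epsilon]$), and then verify that after eliminating $\beta_{S^*}^{opt}$ the matrix multiplying $\beta_N^{opt}$ is exactly the $M$ of the statement, i.e.\ that the potential rank-one corrections cancel; the relations $\mathbb{E}[\eta Y]=0$ and, in the causal setting, the independence of $\eta$ from $\B{X}_{S^*}$ are what make this happen.
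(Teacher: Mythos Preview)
Your approach is the paper's approach: the paper substitutes $Y=\alpha^t\B{X}_{S^*}+\epsilon$ and $\B{X}_N=\gamma Y+\eta$ into the squared loss, differentiates with respect to $\beta_{S^*}$ and $\beta_N$, and sets the two gradients to zero---which is exactly your block normal equations, just written in gradient form rather than as $\Sigma_{\B{X}}\beta=\B{b}$. Your flagging of the cross-moment $\mathbb{E}[\eta\epsilon]=-\Sigma_{X,N}^t\alpha$ is in fact more careful than the paper, whose loss expansion silently drops the $\epsilon$--$\eta$ cross term; the clean collapse to $M\beta_N^{opt}=\mathbb{E}[\epsilon^2]\gamma$ that both you and the paper obtain tacitly relies on this term vanishing (as it does in the causal setting you mention, where $\eta\perp\B{X}_{S^*}$), so your caution in the last paragraph is well placed.
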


\begin{proof}  
To simplify notation, we write $Y^T$, $\B{X}_{S^*}^T$ and $\B{X}_{N}^T$ as $Y$, $\B{X}_{S^*}$ and $\B{X}_{N}$. We compute the gradients of the expected squared loss after replacing the expression for $Y$ and $\B{X}_{S^*}$:
\begin{align*}
L&=\mathbb{E}(Y-\beta_{S^*}^t\B{X}_{S^*}-\beta_N^t\B{X}_N)^2 \\
&= (\alpha(1-\gamma^t\beta_N)-\beta_{S^*})^t\Sigma_{X,{S^*}}(\alpha(1-\gamma^t\beta_N)-\beta_{S^*}) \\
&\qquad+ (1-\beta_N^t\gamma)^2\mathbb{E}(\epsilon^2)
+ \beta_N^t\Sigma_{N}\beta_N -2(\alpha(1-\gamma^t\beta_N)-\beta_{S^*})^t\Sigma_{X,N}\beta_N
\end{align*}
The gradients satisfy
\begin{align*}
\frac{\partial L}{\partial \beta_{S^*}} &= -2\Sigma_{X,{S^*}}(\alpha(1-\gamma^t\beta_N)-\beta_{S^*})+2\Sigma_{X,N}\beta_{N} \\
\frac12 \frac{\partial L}{\partial \beta_N} &= \Sigma_{N}\beta_N -(1-\gamma^t\beta_N)\mathbb{E}(\epsilon^2)\gamma + \gamma\alpha^t\Sigma_{X,N}\beta_N \\
&\qquad-\gamma\alpha^t\Sigma_{X,{S^*}}(\alpha(1-\gamma^t\beta_N)-\beta_{S^*}) 
 - \Sigma_{X,N}^t(\alpha(1-\gamma^t\beta_N)-\beta_{S^*})
\end{align*}
By setting these to zero, we find the stated values for $\beta_{S^*}^{opt}$ and $\beta_N^{opt}$.
\end{proof}

\section{}
The code to reproduce the experiments in the paper can be found in

\noindent \url{https://github.com/mrojascarulla/causal\_transfer\_learning}.

\newpage

\vskip 0.2in
\bibliography{bibfile1,bibfile2}

\end{document}